\algnewcommand{\Inputs}[1]{%
  \State \textbf{Inputs:}
  \Statex \hspace*{\algorithmicindent}\parbox[t]{.8\linewidth}{\raggedright #1}
}
\algnewcommand{\Initialize}[1]{%
  \State \textbf{Initialize:}
  \Statex \hspace*{\algorithmicindent}\parbox[t]{.8\linewidth}{\raggedright #1}
}
\algnewcommand{\Outputs}[1]{%
  \State \textbf{Outputs:}
  \Statex \hspace*{\algorithmicindent}\parbox[t]{.8\linewidth}{\raggedright #1}
}
\def\ka{\kappa}
\def\Ga{\Gamma}
\def\ver{\on{\ver}}
\def\Id{\on{Id}}
\def\Diff{\on{Diff}}
\def\x{\times}
\def\R{{\mathbb R}}
\def\exp{\operatorname{exp}}
\let\on=\operatorname
\def\Tr{\on{Tr}}
\def\vol{\on{vol}}
\def\dist{\on{dist}}
\def\Met{\on{Met}}
\def\bign#1{\mathclose{\hbox{$\left#1\vbox to8.5\p@{}\right.\n@space$}}\mathopen{}} \def\Bign#1{\mathclose{\hbox{$\left#1\vbox to11.5\p@{}\right.\n@space$}}\mathopen{}} \def\biggn#1{\mathclose{\hbox{$\left#1\vbox to14.5\p@{}\right.\n@space$}}\mathopen{}} \def\Biggn#1{\mathclose{\hbox{$\left#1\vbox to17.5\p@{}\right.\n@space$}}\mathopen{}} \makeatother
\begin{document}
\title{Integrated Construction of Multimodal Atlases with Structural Connectomes in the Space of Riemannian Metrics}

\melbaheading{2022:016}{https://www.melba-journal.org/papers/2022:016.html}{2022}{1-25}{09/2021}{06/2022}{Campbell, Dai, Su, Bauer, Fletcher and Joshi}{Information Processing in Medical Imaging (IPMI) 2021}{Aasa Feragen, Stefan Sommer, Julia Schnabel, Mads Nielsen}

\ShortHeadings{Integrated Construction of Multimodal Atlases}{K. M. Campbell et al.}
%

\author{\name Kristen M. Campbell \email kris@sci.utah.edu \\
\addr Scientific Computing and Imaging Institute, University of Utah, Salt Lake City, UT, USA \\
\addr School of Computing, University of Utah, Salt Lake City, UT, USA 
\AND
\name Haocheng Dai \email hdai@sci.utah.edu\\
\addr Scientific Computing and Imaging Institute, University of Utah, Salt Lake City, UT, USA \\
\addr School of Computing, University of Utah, Salt Lake City, UT, USA 
\AND
\name Zhe Su \email zsu20@g.ucla.edu \\
\addr Department of Neurology, University of California Los Angeles, Los Angeles, CA, USA
\AND
\name Martin Bauer \email bauer@math.fsu.edu\\
\addr Department of Mathematics, Florida State University, Tallahassee, FL, USA 
\AND
\name P. Thomas Fletcher \email ptf8v@virginia.edu \\
\addr Department of Electrical \& Computer Engineering, University of Virginia, Charlottesville, VA, USA \\
\addr Department of Computer Science, University of Virginia, Charlottesville, VA, USA
\AND
\name Sarang C. Joshi \email sjoshi@sci.utah.edu \\
\addr Scientific Computing and Imaging Institute, University of Utah, Salt Lake City, UT, USA \\
\addr Department of Bioengineering, University of Utah, Salt Lake City, UT, USA}

\maketitle              

\begin{abstract}%
The structural network of the brain, or structural connectome, can be represented by fiber bundles generated by a variety of tractography methods.  While such methods give qualitative insights into brain structure, there is controversy over whether they can provide quantitative information, especially at the population level.  In order to enable population-level statistical analysis of the structural connectome, we propose representing a connectome as a Riemannian metric, which is a point on an infinite-dimensional manifold.  We equip this manifold with the Ebin metric, a natural metric structure for this space, to get a Riemannian manifold along with its associated geometric properties.  We then use this Riemannian framework to apply object-oriented statistical analysis to define an atlas as the Fr\'echet mean of a population of Riemannian metrics.  This formulation ties into the existing framework for diffeomorphic construction of image atlases, allowing us to construct a multimodal atlas by simultaneously integrating complementary white matter structure details from DWMRI and cortical details from T1-weighted MRI.  We illustrate our framework with 2D data examples of connectome registration and atlas formation. Finally, we build an example 3D multimodal atlas using T1 images and connectomes derived from diffusion tensors estimated from a subset of subjects from the Human Connectome Project.
\end{abstract}

\section{Introduction}
Tractography is one way to represent a structural connectome, or structural network of a brain, which consists of brain regions that are physically connected by a network of neuronal bundles that make up the white matter of that brain.  It is not currently possible to image individual neurons in a living brain non-invasively.  Instead, we can use diffusion-weighted magnetic resonance images (DWMRI) to infer the pathways of coherent bundles of neurons that cross through other bundles in white matter and connect with end points in gray matter.
There are a variety of tractography algorithms used to infer these white matter pathways, ranging from local methods that integrate local orientation information to form individual streamlines to global methods that estimate all fiber tracts simultaneously. Deterministic~\citep{basser2000vivo} and probabilistic~\citep{behrens2003characterization} streamline integration methods are easy to compute, but they suffer from accumulation of local orientation errors leading to tract reconstructions biased toward shorter and straighter tracts. Conversely, global estimation methods~\citep{jbabdi2007bayesian,christiaens2015global} incorporate prior anatomical knowledge while ensuring that tractography is consistent with the underlying data. However, these global methods have convergence issues, sensitivity to initialization and priors, and a tendency to have estimated tracts end in the middle of white matter regions.  Biases in tract reconstruction introduced by either local or global methods affect the accuracy of quantitative measures such as track density and connection strength.  

Geodesic tractography algorithms, first introduced by~\cite{o2002new}, use a combination of local and global information to determine tracts by formulating white matter pathways as geodesic curves under a Riemannian metric derived from the DWMRI data. In the original work, \citet{o2002new} use the inverse diffusion tensor as the Riemannian metric. As described by \citet{lenglet2004}, the inverse diffusion tensor metric has a connection to Brownian motion through the Laplace Beltrami operator on the resulting Riemannian manifold. The inverse tensor metric favors paths that follow the principal eigenvector of the diffusion tensor, as this is the locally optimal direction to move. However, geodesic curves for this metric do not consistently follow the principal eigenvector of the diffusion tensor, and tend to be overly straight in regions where the white matter fibers are bending, losing association with the underlying anatomy in these areas. This problem has been addressed by several strategies to improve the adherence of geodesics to the white matter geometry, including ``sharpening'' the inverse diffusion tensor~\citep{fletcher2007}, using the adjugate of the diffusion tensor~\citep{fuster2016}, and using a conformal metric~\citep{hao2014improved}. One advantage to the geodesic tractography formulation is that we can do uncertainty and confidence interval analysis of the tractographies following~\citep{sengers2021}.

These tractography techniques help to give insight into the structure of a single brain, but it remains an open challenge to quantitatively measure how these structural connectome pathways vary in a population.
To do such a population analysis, we first need a common frame of reference, or atlas space, where spatial differences between subjects' white matter can be measured.  

Initially, white matter atlases were constructed by aligning DWMRI to an anatomical template, then transforming and averaging the associated tensor field or distribution function field. For example, \citet{mori2008dti} construct a diffusion tensor imaging (DTI) atlas by registering the DWMRI of multiple subjects to a standardized anatomical template. They build the DTI atlas by transforming the diffusion tensors for each subject~\citep{alexander2001spatial} and then taking the Euclidean average of the transformed diffusion tensors at each voxel. This approach does not use the white matter directionality information encoded in the diffusion images during the registration. It also suffers from the fact that the Euclidean average of diffusion tensors does not take into account the directionality and tends to be fatter (i.e., less anisotropic) than the input tensors~\citep{fletcher2007riemannian}. Another approach by~\citet{yeh2018population} is to register $q$-space diffusion images into an anatomical template and estimate the spin distribution function (SDF) at each voxel in the template. Then the SDFs are averaged on a per-voxel basis. While this method does take into account the directionality of the white matter in a local neighborhood, it does not take into account consistency of long-range white matter connections.  These methods both rely on low-accuracy registration to the anatomical template image, because at the time high-accuracy diffeomorphic registrations could not be used as they did not preserve the continuity of pathway directions across voxels~\citep{yeh2018population}.  

The field then focused on constructing atlases directly from tractography.  These methods  are adept at finding well-known tracts, but they generally require significant expert curation to remove the many false-positive fibers and tracts they create~\citep{zhang2018anatomically}.  Additionally, variation of population-level tractography characteristics like fiber density of tracts is as likely to be reflective of the tractography process chosen as it is of the underlying physical white matter.  These and other biases in tractography quantification are well-characterized by~\citet{jeurissen2019diffusion}.

In order to get a more complete anatomical atlas, ~\citet{toga2006towards} argue for the integrated derivation of multimodal atlases using techniques such as spatial normalization to produce more comprehensive atlases.  Some attempts have been made to create a multimodal population atlas from T1 and DWMRI images. \citet{gupta2016framework} rigidly register the DWMRI into the same space as the T1 template image and then apply the same deformations used to create the T1 template to the DWMRI before estimating the diffusion tensors from the transformed DWMRI.  This approach puts the DWMRI and T1 images in the same space, but does not use the white matter structure to inform the creation of the T1 atlas. Other previous work on multichannel registration of diffusion and T1 MRI include \citet{avants2007multivariate}, who use a Euclidean image match metric on the diffusion tensors, and \citet{uus2020multi}, who use local angular correlation on the orientation distribution functions (ODFs). Like the white matter-only atlases approaches discussed above, these methods only take local information into consideration and thus do not preserve the consistency of long-range white matter connections. They do demonstrate that registration quality is improved over single channel registration when complementary channels are combined in the objective function.

We are motivated to find an approach that can preserve the best aspects of these atlas and tractography methods, while mitigating their weaknesses.  Specifically, we want to create a white matter pathway atlas that preserves local orientations and other anatomical information while maintaining the continuity and integrity of long-range connections.  We then want a method to bring subjects into that atlas space to enable statistical quantification of both structural connectivity and geometric variability of white matter structure across a population. 

To meet these goals, we describe the metric matching framework presented by~\citet{campbell2021structural} in more detail and then extend it by combining diffeomorphic metric matching with diffeomorphic image matching to enable the construction of both a multimodal white and gray matter atlas simultaneously for the first time.  This formulation preserves geodesics transformed by diffeomorphisms, which meets our objective to use local diffusion data and maintain the integrity of long-range connectomics as inferred by tractography~\citep{cheng2015tractography}.  We demonstrate that long-range connections are preserved by performing geodesic tractography on the resulting atlas.

\subsection{Contributions of the Article}
In this paper, we contribute a mathematical framework for diffeomorphic metric matching that is compatible with existing image matching frameworks to enable the creation of integrated multimodal atlases.  We start by using the concept from geodesic tractography to represent connectome fibers as geodesics of a metric, that is, each brain's white matter structure is represented as a point on the infinite-dimensional manifold of Riemannian metrics.  This manifold is then equipped with the diffeomorphism-invariant Ebin metric to compute distances and geodesics between connectomes.   We explain how this Riemannian manifold is the foundation for the algorithm for diffeomorphic metric registration of structural connectomes and the statistical groupwise metric atlas estimation algorithm. We then extend this model by including an image matching term in those algorithms.  Finally, we simultaneously estimate an integrated multimodal white matter pathway and T1 MRI-based image atlas for the first time. This article is an extended version of the  Information Processing in Medical Imaging (IPMI) conference paper~\citep{campbell2021structural}, where we expand the results of the conference proceedings in several major directions. Most importantly, the joint white matter pathway and T1 MRI-based image atlas model is newly introduced and, in contrast to~\cite{campbell2021structural} which only contained 2D examples, we now present 3D atlas construction examples.

\section{Structural Connectomes as Riemannian Metrics}\label{sec:structasmet}

 In the white matter of the brain, the diffusion of water is restricted perpendicular to the
 direction of the axons. Diffusion-weighted MRI measures the
 microscopic diffusion of water in multiple directions at every voxel in a 3D volume. Thus, the
 directionality of white matter in the brain can be locally inferred. 
Traditionally, global connections of the white matter have been estimated by a procedure called {\it tractography}, which numerically computes integral curves of the vector field formed by the most likely direction of fiber tracts at each point. DTI models anisotropic water diffusion with a 3x3 symmetric positive definite tensor, $D(x)$, at each voxel, $x\in M$, where the manifold $M \subset \mathbb{R}^3$ is the image domain. The principal eigenvector of $D(x)$ is aligned with the direction of the strongest diffusion.  

Riemannian metrics that represent connectomics of a subject have been developed in diffusion imaging by ~\cite{o2002new} and include the inverse-tensor metric $\tilde g = D(x)^{-1}$. However, the geodesics associated with the inverse-tensor metric tend to deviate from the principal eigenvector directions and take straighter paths through areas of high curvature.

In this work we build on the algorithm developed by~\cite{hao2014improved}, which estimates a spatially-varying function, $\alpha(x) : M \to \mathbb{R}$, that modulates the inverse-tensor metric to create a locally-adaptive Riemannian metric, $g_\alpha = e^{\alpha(x)}\tilde g$.
We briefly describe the method here for completeness but refer the reader to~\cite{hao2014improved} for details. This adaptive \emph{connectome metric}, $g_\alpha$, is conformally equivalent to the inverse-tensor metric and is better at capturing the global connectomics, particularly through regions of high curvature.  Figure~\ref{fig:metricestgeos} shows how well the geodesics of each metric match the integral curve of the vector field.  The connectome metric geodesics are very closely aligned with the integral curves.

The geodesic between two end-points, $p, q \in M$, associated with the inverse-tensor metric, $\tilde g(x) = D(x)^{-1}$, minimizes the energy functional, $\tilde E$, while the geodesic associated with the connectome metric, $g_\alpha(x) = e^{\alpha(x)}D(x)^{-1}$, minimizes the energy functional, $E_\alpha$.
\begin{equation}
   \begin{aligned}
   \tilde E(\gamma) = \int_0^1\langle T(t), T(t) \rangle_{\tilde g} dt,
   \end{aligned}  
   \qquad
   \begin{aligned}
   E_\alpha(\gamma) = \int_0^1e^{\alpha(x)}\langle T(t), T(t) \rangle_{\tilde g} dt,   
   \end{aligned}
\end{equation}
where $\gamma: [0, 1] \to M$, $\gamma(0) = p$, $\gamma(1) = q$, $T= \frac{d\gamma}{dt}$. 

Analyzing the variation of $E_\alpha$ leads to the geodesic equation, $\mathrm{grad}\, \alpha = 2 \nabla_T T$, where the Riemannian gradient of $\alpha$, $\mathrm{grad}\, \alpha = \tilde g^{-1}\bigl(\frac{\partial \alpha}{\partial x^1}, \frac{\partial \alpha}{\partial x^2}, \cdots, \frac{\partial \alpha}{\partial x^n}\bigr)$, and $\nabla_T T$ is the covariant derivative of $T$ along its integral curve.

To enforce the desired condition where the tangent vectors, $T$, of the geodesic match the vector field, $V$, of the unit principal eigenvectors of $D(x)$, we minimize the functional, $F(\alpha) = \int_M || \mathrm{grad}\, \alpha - 2 \nabla_V V ||^2_{\tilde g} dx$.  The equation for $\alpha$ that minimizes $F(\alpha)$ is 
\begin{equation}
\Delta_{\tilde g} \alpha = 2\, \mathrm{div}_{\tilde g} (\nabla_V V),
\label{eqn:poisson}
\end{equation}
where $\mathrm{div}$ and $\Delta$ are the Riemannian divergence and  Laplace-Beltrami operator.
We discretize the Poisson equation in Equation \eqref{eqn:poisson} using a second-order finite difference scheme that satisfies both the Neumann boundary conditions $\frac{\partial \alpha}{\partial \overrightarrow{n}}= \langle \mathrm{grad}\, \alpha, \overrightarrow{n} \rangle = \langle 2\nabla_V V, \overrightarrow{n}\rangle$ and the governing equation on the boundary.  We then solve for $\alpha$.

Note that we can use this method to match the geodesics of the connectome metric to other vector fields defining the tractogram, e.g., from higher-order diffusion models that can represent multiple fiber crossings in a voxel. In particular, for tractography based on fiber orientation distributions (FODs), we can use the techniques presented in \cite{nie2019topographic} to generate the vector field $V$.

\begin{figure}[h]
\centering
\includegraphics[width=\linewidth]{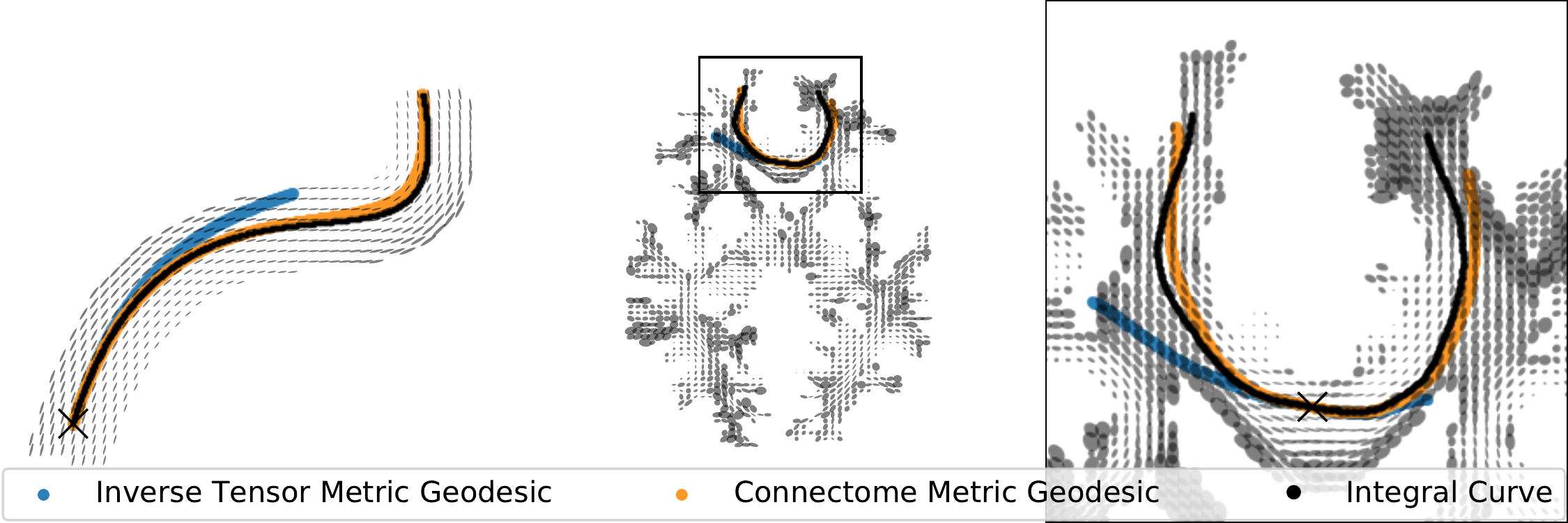}
\caption{A geodesic of the inverse-tensor metric (blue) and adaptive metric (orange), along with an integral curve (black) associated with the principal eigenvectors for a synthetic tensor field (left) and a subject's connectome metric from the Human Connectome Project (center).  Right shows a detailed view of the metric in the corpus callosum.}
\label{fig:metricestgeos}
\end{figure}

The fundamental result of the purposed method is the fact that a metric tensor field is estimated such that a given vector field is a geodesic vector field of the estimated Riemannian metric. It is in this sense that the estimated metric field captures the geometry of the white matter as inferred by tractoraphy. This metric field that captures the tractography will be the primary driving force for the registration algorithms. We will now develop registration and atlas construction algorithms that leverage the rich geometric structure of Manifold of all Riemannian Metrics.

\section{The Geometry of the Manifold of all Riemannian Metrics}\label{sec:riemet}
Once we have estimated a Riemannian metric for a human connectome, it is a point in the infinite-dimensional manifold of all Riemannian metrics, $\Met(M)$, where $M$ is the domain of the image. We will equip this space  with a diffeomorphism-invariant Riemannian metric, called the Ebin or DeWitt metric~\citep{Ebin1970a,DeWitt67}. 
The invariance of the infinite-dimensional metric under the group of diffeomorphisms  $\Diff(M)$ is a crucial property, as it guarantees the independence of an initial choice of coordinate system on the brain manifold. As we will base our statistical framework on this infinite-dimensional geometric structure, we will now give a detailed overview of its induced geometry on $\Met(M)$.

Let  $M$ be a smooth $n$-dimensional manifold; for our targeted applications $n$ will be two or three. We denote by $\Met(M)$ the space of all smooth Riemannian metrics on $M$, i.e., each element $g$ of the space $\Met(M)$ is a symmetric, positive-definite $0 \choose 2$ tensor field on $M$.  It is convenient to think of the elements of $M$ as being point-wise positive-definite sections of the bundle of symmetric two-tensors $S^2 T^\ast M$, i.e., smooth maps from $M$ with values in $S^2_+ T^\ast M$.  
Thus, the space $\Met(M)$  is an open subset of the linear space $\Ga(S^2 T^\ast M)$ of all smooth symmetric $0 \choose 2$ tensor fields and hence itself a smooth Fr\'echet-manifold \citep{Ebin1970a}. Furthermore, let $\Diff(M)$ denote the infinite-dimensional Lie group of all smooth diffeomorphisms of the manifold $M$. Elements of 
$\Diff(M)$ act as coordinate changes on the manifold $M$. This group acts on the space of metrics via pullback
\begin{align}\label{eq:diff_action_pullback}
\Met(M)\times\Diff(M)\to \Met(M), \qquad (g,\varphi)\mapsto \varphi^*g=g(T\varphi \cdot, T\varphi \cdot)\;.
\end{align}
In an analogous way we can define the pushforward action
\begin{align}\label{eq:diff_action}
\Met(M)\times\Diff(M)\to \Met(M), \qquad (g,\varphi)\mapsto \varphi_*g=\left(\varphi^{-1}\right)^{*}g
\end{align}

It is important to note that the geometries of the metrics $g$ and $\varphi^*g$ ($\varphi_* g$ resp.) are also related via $\varphi$. In particular, geodesics with respect to $g$ are mapped via $\varphi$ to geodesics with respect to $\varphi^*g$ (via $\varphi^{-1}$ for $\varphi_* g$, resp.). 
On the infinite-dimensional manifold $\Met(M)$, there exists a natural Riemannian metric:
the reparameterization-invariant $L^2$-metric. To define the metric, we need to first characterize the tangent space of the manifold of all metrics:  $\Met(M)$ is an open subset of  $\Ga(S^2 T^\ast M)$. Thus, every  
tangent vector $h$ is a smooth bilinear form $h: TM \x_M TM \to \R$ that can be equivalently interpreted as a map $TM \to T^\ast M$.  The $L^2$-metric is given by
\begin{align}\label{eq.Ebinmetric}
    G^{E}_g(h,k)=\int_M \on{Tr}\big(g^{-1} hg^{-1} k\big)\vol(g),
\end{align}
with $g \in \Met(M)$, $h,k\in T_g\Met(M)$ and $\vol(g)$ the induced volume density of the metric $g$.  This metric, introduced in \cite{Ebin1970a}, is also known as the Ebin metric. We call the metric \emph{natural} as it requires no additional background structure and is consequently invariant under the pushforward and pullback actions of the diffeomorphism group, i.e.,
\begin{equation}\label{eq:invariant}
G_g(h,k)=G_{\varphi^*g}(\varphi^*h,\varphi^*k)=G_{\varphi_*g}(\varphi_*h,\varphi_*k)
\end{equation}
for all $\varphi \in \Diff(M)$, $g \in \Met(M)$ and $h,k\in T_g\Met(M)$. 
Note that the invariance of the metric follows directly from the substitution formula for multi-dimensional integrals.

The Ebin metric induces a particularly simple geometry on the space $\Met(M)$, with explicit formulas for geodesics, geodesic distance and curvature. In the following theorem and corollary, we will present the most important of these formulas, which will be of importance for our proposed metric matching framework.

First we note that a metric $g\in\Met(M)$, in local coordinates, can be represented as a field of symmetric, positive-definite $n\times n$ matrices that vary smoothly over $M$. Similarly, each tangent vector at $g$ can be represented as a field of symmetric $n\times n$ matrices. By the results of \cite{freed1989basic,gil1991riemannian,clarke2013geodesics}, one can reduce the investigations of the space of all Riemannian metrics to the study of the geometry of the finite-dimensional space of symmetric, positive-definite $n \times n$ matrices. The point-wise nature of the Ebin metric allows one to solve the geodesic initial and boundary value problem on $\Met(M)$ for each $x\in M$ separately. Consequently the formulas for geodesics, geodesic distance and curvature on the finite-dimensional matrix space can be translated directly to results for the Ebin metric on the infinite-dimensional space of Riemannian metrics.

 Note that the space of Riemannian metrics, $\Met(M)$ with the Ebin metric, is not metrically complete and not geodesically convex. Thus the minimal geodesic between two Riemannian metrics may not exist in $\Met(M)$, but only in a larger space; the metric completion $\overline{
 \Met}(M)$, which consists of all possibly degenerate Riemannian metrics. This construction has been worked out in detail by~\cite{clarke2013completion} -- including the existence of minimizing paths in $\overline{\Met}(M)$. In the following proof, we will omit these details and refer the interested reader to the article~\cite{clarke2013completion}  for a more in-depth discussion. In Theorem~\ref{thm:ebingeodesics}, we present an explicit formula for the minimizing geodesic in $\overline{\Met}(M)$ that connects two given Riemannian metrics. 
\begin{theorem}[Minimizing geodesics]\label{thm:ebingeodesics}
	For $g_0, g_1\in\Met(M)$ we define
    \begin{align}
        k(x) &= \log\left(g_0^{-1}(x)g_1(x)\right),\quad k_0(x) = k(x) - \frac{\Tr(k(x))}{n}\Id\\
		a(x) &= \sqrt[4]{\det(g_0(x))},\quad b(x) = \sqrt[4]{\det(g_1(x))},\quad\ka(x) = \frac{\sqrt{n\Tr(k_0(x)^2)}}{4}\\
		q(t,x) &= 1+ t\left(\frac{b(x)\cos(\ka(x))-a(x)}{a(x)}\right),\quad	r(t,x) = \frac{t b(x)\sin(\ka(x))}{a(x)}.
	\end{align}
	Then the minimal path $g(t,x)$ with respect to the Ebin metric in $\overline{\Met}(M)$ that connects $g_0$ to $g_1$ is given by
	\begin{align}
	    g = \begin{cases}
	        \left(q^2+r^2\right)^{\frac2n}g_0\exp\left(\frac{\arctan(r/q)}{\ka}k_0\right) & 0<\ka<\pi,\\
	        q^{\frac4n}g_0 & \ka=0,\\
	        \left(1-\frac{a+b}{a}t\right)^{\frac4n}g_0\mathbbm{1}_{\left[0,\frac{a}{a+b}\right]}+\left(\frac{a+b}{b}t-\frac{a}{b}\right)^{\frac4n}g_1 \mathbbm{1}_{\left[\frac{a}{a+b},1\right]} & \ka\geq\pi,
	    \end{cases}
	\end{align}
where $\mathbbm{1}$ denotes the indicator function in the variable $t$. We suppressed the functions' dependence on $t$ and $x$ for better readability.
	
\end{theorem}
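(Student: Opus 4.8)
The plan is to exploit the pointwise character of the Ebin metric to reduce the infinite-dimensional geodesic boundary value problem to a finite-dimensional one, and then to recognize the resulting finite-dimensional geometry as a metric cone, whose minimizing geodesics are classically understood. Since the integrand of $G^E$ in \eqref{eq.Ebinmetric} depends only on the pointwise values of $g,h,k$ (and not on their derivatives), the energy of a path $t\mapsto g(t,\cdot)$ factors, by Fubini, as $\int_M\big[\int_0^1 \Tr((g^{-1}\partial_t g)^2)\sqrt{\det g}\,dt\big]dx$. As the endpoints are prescribed pointwise, minimizing the path energy amounts to minimizing the inner integral separately for each fixed $x$. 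Following \cite{freed1989basic,gil1991riemannian,clarke2013geodesics}, this reduces the theorem to finding minimizing geodesics on the finite-dimensional manifold $P(n)$ of symmetric positive-definite $n\times n$ matrices equipped with the metric $\langle h,k\rangle_g=\Tr(g^{-1}hg^{-1}k)\sqrt{\det g}$.

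Next I would expose the cone structure of $(P(n),\langle\cdot,\cdot\rangle)$. Writing $g=a^{4/n}\bar g$ with $a=\sqrt[4]{\det g}$ and $\bar g$ unimodular ($\det\bar g=1$), a direct computation gives $g^{-1}\partial_t g=\tfrac4n\tfrac{\dot a}{a}\Id+\bar g^{-1}\dot{\bar g}$ with $\bar g^{-1}\dot{\bar g}$ trace-free, together with $\sqrt{\det g}=a^2$. The cross term vanishes under the trace, so the metric splits as $\tfrac{16}{n}\,da^2+a^2\,d\sigma^2$, where $d\sigma^2=\Tr\big((\bar g^{-1}d\bar g)^2\big)$ is the metric of the symmetric space $\on{SL}(n)/\on{SO}(n)$ of unimodular metrics. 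After rescaling the radial coordinate this is exactly the Euclidean metric cone over $\on{SL}(n)/\on{SO}(n)$. The endpoints $g_0,g_1$ determine radial coordinates $a,b$ and, via the base geodesic $s\mapsto\bar g_0\exp(s k_0)$ (which has constant $d\sigma$-speed $\sqrt{\Tr(k_0^2)}$), a base direction; the quantity $\ka=\tfrac{\sqrt{n\Tr(k_0^2)}}{4}$ is precisely the base distance measured in the cone, i.e.\ the planar angle subtended at the apex.

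For $0<\ka<\pi$ I would then note that the minimizer lies in the flat $2$-dimensional totally geodesic cone over the single base geodesic joining $\bar g_0$ to $\bar g_1$, develop this sector isometrically onto a sector of angle $\ka$ in the Euclidean plane, and take the straight chord between the two boundary points at radii $a$ and $b$. Reading off its Cartesian coordinates, after pulling out a common factor, yields exactly $q$ and $r$; converting back through polar radius $a\sqrt{q^2+r^2}$ and polar angle $\arctan(r/q)$ via $g=a(t)^{4/n}\bar g(t)$ with $\bar g(t)=\bar g_0\exp\big(\tfrac{\arctan(r/q)}{\ka}k_0\big)$ produces the first case. The degenerate direction $k_0=0$ (pure scaling, so $r=0$) collapses this to the radial geodesic $q^{4/n}g_0$, giving the second case.

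The main obstacle, and the reason the statement is phrased in the completion $\overline{\Met}(M)$, is the case $\ka\ge\pi$. Here the Euclidean chord in the developed sector is no longer a minimizer inside the cone: a geodesic avoiding the apex subtends base angle strictly less than $\pi$, so once $\ka\ge\pi$ the shortest path must pass through the apex $a=0$, which corresponds to a degenerate, non-invertible metric lying only in $\overline{\Met}(M)$. I would therefore replace the chord by the broken radial path running from $g_0$ straight to the zero metric and then straight out to $g_1$, with break time $\tfrac{a}{a+b}$ chosen so that the path has constant speed; this is the third case, and it agrees with the first at the threshold $\ka=\pi$. Making this rigorous -- verifying both that the broken path is genuinely length-minimizing and that no competitor survives in $\Met(M)$ -- is exactly where one invokes the metric-completion results of \cite{clarke2013completion} together with the threshold-angle behavior of geodesics in metric cones, which I would cite rather than reprove.
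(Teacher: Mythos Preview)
Your argument is correct. Both you and the paper begin with the same pointwise reduction to $P(n)$, but from there the routes diverge. The paper's proof is essentially a citation: it invokes \cite[Theorems~4.4 and~4.5]{clarke2013geodesics} to identify the domain of the Riemannian exponential, compute $\on{Exp}_{g_0}^{-1}g_1$ explicitly in the two subcases $0<\ka<\pi$ and $\ka=0$, and then read the geodesic off from the forward exponential formula; for $\ka\ge\pi$ it cites \cite[Theorem~4.14]{clarke2013geodesics} directly for the broken path through the zero tensor. You instead rederive the geometry underlying those theorems: the volume/unimodular splitting $g=a^{4/n}\bar g$ exhibits $(P(n),\langle\cdot,\cdot\rangle)$ as a metric cone over $\on{SL}(n)/\on{SO}(n)$, the quantity $\ka$ becomes the cone angle, and the three cases are the standard trichotomy for minimizing geodesics in a cone (chord in the developed sector, radial line, broken path through the apex). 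What your approach buys is a self-contained geometric explanation of \emph{why} $q,r$ arise (they are Cartesian coordinates of the chord) and why $\pi$ is the threshold; what the paper's approach buys is brevity, since Clarke has already packaged exactly these computations. The two are not really independent---Clarke's proofs rest on the same cone structure you exhibit---so your proposal is best described as unpacking the cited results rather than bypassing them.
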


\begin{proof}
	This theorem is essentially a reformulation of the minimal geodesic formula given in \cite[Theorem 4.16]{clarke2013geodesics}. In fact, noting that the Ebin metric \eqref{eq.Ebinmetric} is point-wise, we can restrict ourselves to each $x\in M$. By \cite[Theorem 4.5]{clarke2013geodesics}, we know that in the case of $0\leq\ka(x)<\pi$, $g_1(x)$ is in the image of the Riemannian exponential map starting at $g_0(x)$, and the Riemannian exponential is a diffeomorphism between $U:=S^2T_x^*M\backslash (-\infty, -4/n]g_0(x)$ and its image. Here $S^2T_x^*M$ denotes the vector space of all symmetric (0,2) tensors at $x\in M$. Using the formula of the inverse exponential map in \cite[Theorem 4.5]{clarke2013geodesics} we calculate the preimage of $g_1(x)$,
	\begin{align}
		\on{Exp}_{g_0}^{-1}g_1\big|_x = \begin{cases}
		\frac4n\left(\frac{b(x)}{a(x)}\cos\kappa(x)-1\right)g_0(x)+\frac{b(x)\sin\kappa(x)}{\kappa(x) a(x)}g_0(x)k_0(x)  & 0<\ka(x)<\pi,\\[.5em]
		\frac4n\left(\frac{b(x)}{a(x)}-1\right)g_0(x) & \ka(x)=0.
		\end{cases}
	\end{align}
	The geodesic formula in the case of $0\leq\ka(x)<\pi$ then follows immediately from \cite[Theorem 4.4]{clarke2013geodesics}.
	For the case of $\ka(x)\geq\pi$, by \cite[Theorem 4.14]{clarke2013geodesics} the minimal geodesic between $g_0(x)$ and $g_1(x)$ is given by the concatenation of the straight segments from $g_0(x)$ to the zero tensor and from the zero tensor to $g_1(x)$, which gives the last statement and finally proves the result.
\end{proof}

An example of calculating a geodesic in the space $\overline{\Met}(M)$ using the explicit formula is visualized in Figure~\ref{fig:geo}.  The ellipse in the fifth row and fourth column, and the one in the first column and row of this figure are examples of geodesic paths that include possibly degenerate Riemannian metrics from the completion $\overline{\Met}(M)$. Note that handling these degenerate cases are well understood and we did not observe any numerical issues associated with such cases.

\begin{figure}[t!]
	\centering
	\includegraphics[width=\linewidth]{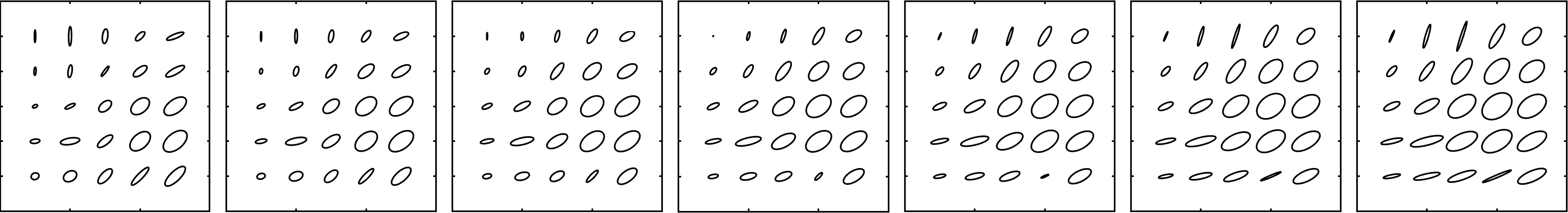}
	$t=0$\hspace{1.05cm} $t=1/6$\hspace{.87cm} $t=2/6$\hspace{.87cm} $t=3/6$\hspace{.87cm} $t=4/6$\hspace{.87cm} $t=5/6$\hspace{.87cm}
	$t=1$
	\caption{An example of an interpolating geodesic between two metric tensors on the grid with respect to the Ebin metric \eqref{eq.Ebinmetric}, where the left and the right ellipse fields represent the boundary metrics. One can observe the behavior of the Ebin metric by following the deformations of the ellipses representing the metric tensor. 
	}
	\label{fig:geo}
\end{figure}
Next, we recall that the geodesic distance of a Riemannian metric is defined as the infimum of all paths connecting two given points,
\begin{equation}\label{eq:geodisdef}
\dist_{\Met}(g_0,g_1)=\inf \int_0^1 \sqrt{G_g(\partial_t g,\partial_t g)} dt,
\end{equation} 
where the infimum is taken over all paths $g:[0,1]\to \Met(M)$ with 
$g(0)=g_0$ and $g(1)=g_1$. As a direct consequence of Theorem~\ref{thm:ebingeodesics}, we obtain the explicit formula for the distance function given in Corollary~\ref{cor:geodesicdist}.
\begin{corollary}[Geodesic distance]\label{cor:geodesicdist}
	Let $g_0, g_1\in\Met(M)$ and let $k, k_0$, $a$, $b$ and $\ka$ be as in Theorem~\ref{thm:ebingeodesics}. Let 
$
	\theta(x) = \min\left\{\pi, \ka(x)\right\}.
$
	Then the squared geodesic distance of the Ebin metric is given by
	\begin{align}\label{eq:distance_function}
	\dist_{\Met}(g_0, g_1)^2 = \frac{16}{n}\int_M \left(a(x)^2 - 2a(x)b(x)\cos\left(\theta(x)\right) + b(x)^2\right)dx.
	\end{align}
\end{corollary}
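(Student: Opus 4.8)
The plan is to exploit the point-wise structure of the Ebin metric, exactly as in the proof of Theorem~\ref{thm:ebingeodesics}, so that the squared geodesic distance becomes an integral over $M$ of a point-wise squared distance on the finite-dimensional space $S^2 T_x^*M$ of symmetric $(0,2)$-tensors. Since Theorem~\ref{thm:ebingeodesics} already furnishes the \emph{minimizing} geodesic connecting $g_0$ to $g_1$, its squared length realizes $\dist_\Met(g_0,g_1)^2$, and I would compute that length point-wise and integrate over $M$.

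For the generic case $0\le\ka(x)<\pi$, the minimizing geodesic is the image of the Riemannian exponential map, so it has constant speed equal to the norm of its initial velocity $v:=\on{Exp}_{g_0}^{-1}g_1$. Hence the point-wise squared distance is simply $G^E_{g_0}(v,v)=\int_M\Tr(g_0^{-1}vg_0^{-1}v)\vol(g_0)$, and the inverse-exponential formula recorded in the proof of Theorem~\ref{thm:ebingeodesics} gives $v$ explicitly as a combination $A\,g_0+B\,g_0k_0$ with $A=\tfrac4n(\tfrac{b}{a}\cos\ka-1)$ and $B=\tfrac{b\sin\ka}{\ka a}$. First I would substitute this into the trace: because $g_0^{-1}v=A\,\Id+B\,k_0$ and $\Tr(k_0)=0$ by construction, the cross term drops and $\Tr(g_0^{-1}vg_0^{-1}v)=A^2 n+B^2\Tr(k_0^2)$. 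Using the defining relation $\Tr(k_0^2)=16\ka^2/n$ collapses the $\ka$-dependence, and after expanding the squares the bracket simplifies to $\tfrac{16}{n}\big(\tfrac{b^2}{a^2}-2\tfrac{b}{a}\cos\ka+1\big)$. Finally, since $\vol(g_0)=\sqrt{\det g_0}\,dx=a^2\,dx$, the factor $a^2$ clears the denominators and yields the integrand $\tfrac{16}{n}(a^2-2ab\cos\ka+b^2)$; with $\theta=\ka$ on this range, integrating over $M$ gives the claimed formula.

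For the remaining case $\ka(x)\ge\pi$, the minimizing path no longer stays in $\Met(M)$: by \cite[Theorem 4.14]{clarke2013geodesics} it is the concatenation of the two straight segments from $g_0(x)$ to the degenerate zero tensor and from there to $g_1(x)$, as used in Theorem~\ref{thm:ebingeodesics}. I would compute the lengths of these two straight rays to the boundary of $\overline{\Met}(M)$ separately, obtaining point-wise distances $\tfrac{4}{\sqrt n}a$ and $\tfrac{4}{\sqrt n}b$ respectively; their sum squared is $\tfrac{16}{n}(a+b)^2$, which is exactly the integrand above evaluated at $\cos\theta=\cos\pi=-1$. Since $\theta=\min\{\pi,\ka\}=\pi$ precisely when $\ka\ge\pi$, the two cases assemble into the single formula \eqref{eq:distance_function}.

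The computation in the generic case is essentially routine trace algebra; the one place that needs care is the degenerate case, where one must justify that the minimizing distance is the sum of the two segment lengths (rather than the length of a single smooth geodesic, which here fails to be minimizing) and that the point-wise distance to the zero tensor is the correct multiple of $a$. I expect this degenerate analysis — properly handling the passage through the metric completion $\overline{\Met}(M)$ — to be the main obstacle, and I would lean on the detailed treatment in \cite{clarke2013geodesics,clarke2013completion} rather than reproving it here.
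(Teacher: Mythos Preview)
Your proposal is correct and follows the same overall strategy as the paper's (very terse) proof: use the explicit minimizing path from Theorem~\ref{thm:ebingeodesics}, compute its length point-wise, and integrate over $M$. The paper simply says ``by calculating the time derivative $\partial_t g(t)$, the final statement follows from the definition of the geodesic distance by a direct computation,'' i.e., it differentiates the full path formula in $t$ and integrates the resulting speed. Your route is a mild but genuine simplification: in the regime $0\le\ka<\pi$ you bypass differentiating the explicit $(q,r,\arctan)$ formula by invoking constant speed and evaluating $G^E_{g_0}(v,v)$ directly from the inverse-exponential $v=\on{Exp}_{g_0}^{-1}g_1$, which reduces everything to the clean trace identity $\Tr(k_0)=0$, $\Tr(k_0^2)=16\ka^2/n$. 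This buys you a shorter computation; the paper's approach, by contrast, never needs to appeal to the constant-speed property and would work verbatim for any explicit path. Your treatment of the $\ka\ge\pi$ case (summing the two segment lengths to the zero tensor and squaring) is also correct and more explicit than what the paper writes down; your instinct to lean on \cite{clarke2013geodesics,clarke2013completion} for the justification that this concatenation realizes the infimum in $\overline{\Met}(M)$ is exactly right.
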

\begin{proof}
	Using Theorem~\ref{thm:ebingeodesics}, we obtain the formula of the minimal geodesic that connects $g_0$ and $g_1$. By calculating the time derivative $\partial_tg(t)$, the final statement follows from the definition of the geodesic distance \eqref{eq:geodisdef} by a direct computation.
\end{proof}

Having equipped the space of Riemannian metrics with the distance function~\eqref{eq:distance_function}, we can consider the Fr\'echet mean, $\hat g$, of a collection of metrics, $g_1,\ldots g_N$, which is defined as a minimizer of the sum of squared distances.
\begin{equation}
\hat{g}=\underset{g }{\operatorname{\rm argmin}}
    \sum_{i=1}^{N}\operatorname{dist}_{\Met}^{2}(g,g_i). 
\end{equation}
One could directly minimize this functional using a gradient-based optimization procedure. As our distance function is the geodesic distance function of a Riemannian metric and since we have access to an explicit formula for the minimizing geodesics, we will instead use the iterative geodesic marching algorithm, see e.g.~\cite{ho2013recursive}, to approximate the Fr\'echet mean: Given $N$ Riemannian metrics $g_i$, we approximate the Fr\'echet mean via $\hat g=\hat g_{N}$, where $\hat g_{i}$ is recursively defined as 
 $\hat g_0=g_0$, $\hat g_i(x)= g(1/(i+1),x)$ and where
 $g(t,x)$ is the minimal path, as given in Theorem~\ref{thm:ebingeodesics}, connecting $\hat g_{i-1}$ to the $i$-th data point $g_i$.
Thus one only has to calculate $N$ geodesics \emph{in total} in the space of Riemannian metrics, whereas a gradient-based algorithm would require one to calculate $N$ geodesic distances \emph{in each step} of the gradient descent.

\subsection{The Induced Distance Function on the Diffeomorphism Group}\label{section:diffmetric}
We can use the geodesic distance function of the Ebin metric to induce a right-invariant distance function on the group of diffeomorphisms. As we will be using this distance function as a regularization term in our matching functional, we will briefly describe this construction here. We fix a Riemannian metric $g\in \Met(M)$ and define the ``distance'' of a diffeomorphism $\varphi$ to the identity via
\begin{equation}
    \operatorname{dist}_{\Diff}^{2}(\operatorname{id},\varphi) = \operatorname{dist}_{\Met}^{2}(g,\varphi^*g)=\operatorname{dist}_{\Met}^{2}(g,\varphi_*g),
\end{equation}
where the last equality is due to the invariance of the Ebin metric. 
To be more precise, this distance can be degenerate on the full diffeomorphism group since the isometries of the Riemannian metric $g$ form the kernel of $\operatorname{dist}_{\Diff}$. For our purposes we will consider the Euclidean metric for the definition of $\operatorname{dist}_{\Diff}$. Thus the only elements in the kernel are translations and rotations. The right invariance of $\operatorname{dist}_{\Diff}$ follows directly from the $\Diff(M)$-invariance of the Ebin metric. We note, however, that $\dist_{\Diff}$ is not directly associated with a Riemanian structure on the diffeomorphism group: the orbits of the diffeomorphism group in the space of metrics are not totally geodesic and thus $\dist_{\Diff}$ is not the geodesic distance of the pullback of the Ebin metric to the space of diffeomorphisms. See also~\cite{KLMP2013} where this construction has been studied in more detail.

\section{Computational Anatomy of the Human Connectome}
Fundamental to the precise characterization and comparison of the human connectome of an individual subject or a population as a whole is the ability to map or register two different human connectomes. The framework of Large Deformation Diffeomorphic Metric Mapping (LDDMM) is well developed for registering points~\citep{joshi2000landmark}, curves~\citep{glaunes2008large} and surfaces~\citep{vaillant2005surface} all modeled as submanifolds of $\R^{3}$ as well as images modeled as an $L^2$ function~\citep{beg2005computing}.
This framework has also been extended to densities~\citep{bauer2015diffeomorphic} modeled as volume forms. We now extend the diffeomorphic mapping framework to the  connectome modeled as Riemannian metrics. The diffeomorphism group deforms the domain, $M$, and acts naturally on the space of metrics, $\Met(M)$, see Equation~\eqref{eq:diff_action}. With this action and a reparameterization-invariant metric, the problem of registering two connectomes  fits naturally into the framework of computational anatomy. 
 
Now the registration of two connectomes is achieved by minimizing the  energy
\begin{equation}
E(\varphi)= \operatorname{dist}_{\Diff}^{2}(\operatorname{id},\varphi)+\lambda_1 \operatorname{dist}_{\Met}^{2}(g_0,\varphi_*g_1)
\label{eq:energy-metric}
\end{equation}
over all such diffeomorphisms in $\Diff(M)$. 
Here  $\operatorname{dist}_{\Diff}$ is a right invariant distance on $\Diff$ and $\operatorname{dist}_{\Met}$ is a reparameterization-invariant distance  on the space of all Riemannian metrics, e.g., the geodesic distance of the metrics studied above.  The first term measures the deformation cost and the second term is a similarity measure between the target and the deformed source connectome.  The invariance of the two distances is essential for the minimization problem to be independent of the choice of coordinate system on the brain manifold. 

We use the distance function as introduced in Section~\ref{section:diffmetric} to measure the deformation cost, i.e., $\operatorname{dist}_{\Diff}(\operatorname{id},\varphi)=\operatorname{dist}_{\Met}(g,\varphi_*g)=\operatorname{dist}_{\Met}(g,\varphi^*g)$ where $g$ is the restriction of the Euclidean metric to the brain domain. 
This choice greatly increases computational efficiency since we can now use the formulas from Section~\ref{sec:riemet} as explicit formulas for both terms of the energy functional.
To minimize the energy functional, we use a gradient flow approach described in Algorithm~\ref{algo1}, where the gradient on $\Diff(M)$ is calculated with respect to a right invariant Sobolev metric of order one, which  at the identity  is given by
\begin{equation}
G_{\operatorname{id}}(u,v)=\int g(\Delta u,v)dx+\lambda \sum_{i=1}^k g(u,\xi_i) g(v,\xi_i) dx
\end{equation}
where $\lambda>0$ is a weight parameter, $g$ is the restriction of the Euclidean metric to the brain domain and  $\xi_1,\ldots \xi_k$ is an orthonormal basis of the harmonic vector fields on $M$.  This metric, sometimes  
called the information metric, has been first introduced by \cite{Modin2015}; see also~\cite{bauer2015diffeomorphic}. We choose this specific gradient because of the relation of the information metric to both the Ebin metric on the space of metrics and the Fisher-Rao metric on the space of probability densities. We summarize the relations between these geometries in~Figure~\ref{fig:DiffMetCommute};  for a precise description of the underlying geometric picture we refer to~\cite{KLMP2013} and~\cite{bauer2015diffeomorphic}. 
\begin{figure}
    \centering
    \begin{tikzcd}[ampersand replacement=\&, column sep=small, row sep=huge]
 \left(\Diff(M) \bign/ \operatorname{Iso}(\overline{g}),G^I\right) \arrow[rr, "\rm II : \varphi \mapsto \varphi^*\overline{g}"] \arrow[dr, "\rm I : \varphi \mapsto \varphi^* \mu_0"'] \& \&  \left(\Met(M),G^E\right) \arrow[dl, "\rm III : g \mapsto \vol(g)"] \\
\&\left(\operatorname{Dens}(M),G^{FR}\right)  \& 
    \end{tikzcd}
    \caption{Relations between the information metric on the diffeomorphism group, the Ebin metric on the space of Riemannian metrics, and the Fisher-Rao metric on the space of densities. The mappings I and III are Riemannian submersions and the mapping II is an isometric embedding. Furthermore the diagram is commutative, i.e., $\operatorname{I}=\operatorname{II}\circ \operatorname{III}$. Note, that the Ebin metric $G^E$ and the Fisher-Rao metric $G^{FR}$ are of order zero, while the information metric $G^I$ is a first order Sobolev metric. This discrepancy in the orders of the metric is explained by the fact that the mappings $\operatorname{I}$ and $\operatorname{II}$ contain a derivative.}
    \label{fig:DiffMetCommute}
\end{figure}

Note, that our framework allows for the immediate inclusion of  points, curves, surfaces and images in the registration problem. Image intensity information, for example, can be easily incorporated in the registration problem by simply adding an appropriate similarity measure for the image term (e.g. the standard $L^2$ metric between the deformed moving image and the fixed image) to the energy functional. The minimization problem incorporating image intensities naturally becomes:
\begin{equation}
E(\varphi)= \operatorname{dist}_{\Diff}^{2}(\operatorname{id},\varphi)+\lambda_1 \operatorname{dist}_{\Met}^{2}(g_0,\varphi_*g_1) +\lambda_2 \operatorname{dist}_{L^2}^{2}(I_0,\varphi_*I_1) \ ,
\label{eq:energy-joint}
\end{equation}
where $\lambda_1,\lambda_2$ are the relative weights and $\varphi_*I_1 = I_1\circ \varphi^{-1} $ is the natural left action of Diff on $L^2$ images.  In this convention $I_1$ is called the moving image ($g_1$ the moving metric, resp.). It is important to note that the energy functional only depends on $\varphi^{-1}$ and not on $\varphi$. To see this we use the invariance of the geodesic distance on $\Diff$ and the definition of the pushforward to rewrite~\eqref{eq:energy-joint} as
\begin{equation}
E(\varphi)= \operatorname{dist}_{\Diff}^{2}(\operatorname{id},\varphi^{-1})+\lambda_1 \operatorname{dist}_{\Met}^{2}\left(g_0,g_1(T\varphi^{-1}\cdot,T\varphi^{-1}\cdot)\right) +\lambda_2 \operatorname{dist}_{L^2}^{2}(I_0,I_1\circ\varphi^{-1}).
\label{eq:energy-joint2}
\end{equation}
Consequently, we will use $\varphi^{-1}$ rather than $\varphi$ as our optimization variable, c.f. Algorithm~1 below.

\algnewcommand{\IIf}[1]{\State\algorithmicif\ #1\ \algorithmicthen}
\algnewcommand{\EndIIf}{\unskip\ \algorithmicend\ \algorithmicif}
\begin{algorithm}[h]
\caption{Inexact Metric Matching Algorithm}\label{algo1} 
    \begin{algorithmic}
        \Inputs{Fixed and moving metrics $g_0$, $g_1$; Fixed and moving images $I_0$, $I_1$}
        \Initialize{step size $\epsilon$; weight parameters $\lambda_1,\lambda_2$; max iteration times $\operatorname{MaxIter}$} 
        \State{$\varphi^{-1},E\leftarrow\operatorname{id},0$}
       
        \For{$\operatorname{iteration}=0:\operatorname{MaxIter}$}
            \State{$\varphi_*g_1\leftarrow(d\varphi^{-1})^T(g_1\circ\varphi^{-1})(d\varphi^{-1})$}\Comment{Pushforward of $g_1$ by $\varphi$}
            \State{$\varphi_*I_1\leftarrow I_1\circ\varphi^{-1}$}\Comment{Pushforward of $I_1$ by $\varphi$}
            
            \State{$E\leftarrow \operatorname{EbinEnergy}(\varphi_*g_1,g_0,\varphi_*I_1,I_0,\lambda_1,\lambda_2)$}\Comment{Calculate energy by Equation \eqref{eq:energy-joint}}
            
            \State{$v\leftarrow\Delta^{-1}(\operatorname{E.grad})$}\Comment{Transfer gradient w.r.t. information metric to $L^2$}
            \State{$\psi\leftarrow\operatorname{id}-\epsilon v$}\Comment{Construct the approximation} 
            \State{$\varphi^{-1}\leftarrow\psi\circ\varphi^{-1}$}\Comment{Update the diffeomorphism}
        \EndFor
        \State\Return{$\varphi^{-1}$}
    \end{algorithmic}  
\end{algorithm}

\subsection{Estimating the Atlas for a Population of Connectomes}\label{sec:atlas}
Given a collection of connectomes modeled as points on an abstract Riemannian manifold, we can directly apply least-squares estimation to define the average connectome. Thus the template estimation problem can be formulated as a joint minimization problem:
\begin{align}
   \hat{g} = \underset{g,\varphi_i }{\operatorname{\rm argmin}}
    \sum_{i=1}^{N}\operatorname{dist}_{\Diff}^{2}(\operatorname{id},\varphi_i)+ \lambda_1 \operatorname{dist}_{\Met}^{2}(g,{(\varphi_i)}_*g_i)
    \label{eq:atlasmetric}
\end{align}
We use the iterative alternating algorithm proposed in \cite{joshi2004unbiased} for solving the above optimization problem: we alternate gradient steps between optimizing with respect to each diffeomorphism, $ \varphi_i , i=1,\cdots,N $, and minimizing with respect to the metric average $\hat g$. In the metric optimization step we use the  Fr\'echet mean algorithm described in Section~\ref{sec:riemet}.  The above procedure for estimating a connectome atlas can again be trivially extended to jointly estimate an image atlas consistent with the connectome atlas. To estimate the atlases jointly we use the following extended  
joint minimization problem:
\begin{align}
   \hat{g},\hat{I} = \underset{g,I,\varphi_i }{\operatorname{\rm argmin}}
    \sum_{i=1}^{N}\operatorname{dist}_{\Diff}^{2}(\operatorname{id},\varphi_i)+ \lambda_1 \operatorname{dist}_{\Met}^{2}(g,(\varphi_i)_*g_i) + \lambda_2 \operatorname{dist}_{L^2}^{2}(I,(\varphi_i)_*I_i)\ \ . \label{eq:atlasjoint}
\end{align}
For the extended image-connectome atlas alternating algorithm at each iteration of the algorithm, the atlas image is updated by the simple average of the deformed individual subject images. This is a consequence of using the simple $L^2$ metric for the images.  See Algorithm~\ref{algo2} for details of this process.

\begin{algorithm}[h]
\caption{Atlas Building Algorithm}\label{algo2}
    \begin{algorithmic}
        \Inputs{metric fields $G = \{g_1,\dots, g_N\}$, images $I = \{I_1,\dots, I_N\}$}
        \Initialize{max iteration times $\operatorname{MaxIter}$}
        
        \For{$\operatorname{iteration}=0:\operatorname{MaxIter}$}
            \State $g_{\operatorname{mean}}\leftarrow\operatorname{FrechetMean}(G)$\Comment{Section~\ref{sec:riemet}, compute with random order each time}
            \State $I_{\operatorname{mean}}\leftarrow\operatorname{EuclideanMean}(I)$
            \For{$i=1:N$}
                \State{$\varphi^{-1}_i\leftarrow\operatorname{MetricMatching}(g_{\operatorname{mean}},g_i,I_{\operatorname{mean}},I_i)$}\Comment{Algorithm~\ref{algo1}}
                \State{$g_i\leftarrow(\varphi_i)_*g_i$}\Comment{Update $g_i$ by pushforward of $\varphi_i$}
                \State{$I_i\leftarrow(\varphi_i)_*I_i$}\Comment{Update $I_i$ by pushforward of $\varphi_i$}
            \EndFor
        \EndFor
        \State\Return{$g_{\operatorname{mean}}$}
    \end{algorithmic}  
\end{algorithm}  

\subsection{Implementation Details}
In this section, we describe some implementation details that are important for reliable and efficient algorithm performance, particularly when applied to 3D brain data.
\subsubsection{Metric Estimation} As done in \cite{hao2014improved}, we apply a mask to both the connectome metric estimation process and the atlas-building algorithm for two reasons.  First, it is important that we constrain the problem to biologically realistic white matter tracts by not allowing tractography to flow through regions of CSF.  Second, we avoid numeric issues associated with processing air and other noisy regions outside the skull.  This also speeds up computation, as we need to look only at voxels inside the masked region instead of the entire image volume. Care must be taken with both first and second derivatives to use an appropriate and accurate finite difference stencil near the boundaries of the mask to ensure only points inside the mask are used.  When matching the geodesics to a vector field consisting of the principal eigenvector directions, it is important to ensure that the eigenvector signs are consistent prior to computing derivatives of the vector field. 

\subsubsection{Atlas Building} For the atlas building algorithm, we deform each individual mask into atlas space at each outer iteration, and then apply the union of these deformed masks when computing the current atlas estimate. For each iteration of the atlas building algorithm, we perform only two iterations inside the metric matching function to avoid overfitting the individual metrics to early estimates of the Fr\'echet mean. Though we have not observed any subject order-related problems with the Fr\'echet mean calculation in practice, we randomize the order of the subjects each time we compute the mean to avoid the possibility of introducing problems related to subject order. In practice, we find the algorithm behaves well when we update $\epsilon$ in Algorithm~\ref{algo1} such that $1/\epsilon$ is approximately equal to the energy from \eqref{eq:energy-joint}.

For computational efficiency, all our algorithms are implemented in \texttt{PyTorch}, which allows us to take advantage of the built-in GPU acceleration and automatic differentiation.
The computationally most expensive part in the  geodesic distance of the Ebin metric is the calculation of the matrix logarithm, i.e., the calculation of $k = \log(g_0^{-1}g_1)$. Currently the matrix logarithm function is not implemented in \texttt{PyTorch}, and other alternatives such as the function provided by \texttt{Scipy} do not support automatic differentiation. Therefore we  calculate the Cholesky factorization of $g_0=GG^T$. 
 We aim to use this factorization to reduce the eigendecomposition of the nonsymmetric matrix $g_0^{-1}g_1$ to the eigendecomposition of the symmetric matrix  $W=G^{-1}g_1(G^T)^{-1}$: 
 writing 
 $W = Q\Lambda Q^T$ for the eigendecomposition of $W$, we directly obtain the eigendecomposition of $g_0^{-1}g_1= V\Lambda V^{-1}$,
where $V = (G^T)^{-1}Q$.
This in turn allows us
to calculate $k = \log(g_0^{-1}g_1) = V\log(\Lambda)V^{-1}$.

Consequently, the bottleneck of our atlas building algorithm is the large amount of eigendecomposition problems that have to be computed -- in each iteration step we have to solve $N \times \operatorname{Res}$ eigendecompositions of $n\times n$ matrices, where $\operatorname{Res}$ is the resolution of the image and $n$ is the dimension of the domain. 

It turns out that the GPU implementation of \texttt{torch.linalg.eig} is not well-designed for solving eigendecomposition of numerous small matrices. The extremely low speed made our algorithms unusable for any experiments in 3D. To speed up these calculations, we re-implement the eigendecomposition based on the work of \cite{lenssen2019}, which leads to an order of magnitude increase in performance. 

Powered by the \texttt{torch.autograd} module, we can now easily solve the gradient of the Ebin energy in Algorithm~\ref{algo1} without a closed-form gradient solution. Our 3D atlas building code, including both metric-only matching and joint matching, will become publicly available at \url{https://github.com/aarentai/Atlas-Building-3D}.

\subsubsection{Geodesic Tractography}
Several computational strategies have been proposed to compute white matter tractography as geodesic curves. \citet{o2002new} develop a level set approach to solve the Eikonal-type equation of the geodesic distance transform from a seed point. \citet{fletcher2007} extend this approach to simultaneously solve for the entire set of minimal geodesics between two brain regions by solving two Hamilton-Jacobi PDEs. These Hamilton-Jacobi PDEs can be solved quickly on GPUs using the method of \citet{jeong2007}.  While these strategies are better choices for a production pipeline, for this paper we chose to directly integrate the geodesic equation from seed points using the principal eigenvector at each seed as the initial direction for shooting.

Given a Riemannian metric $g$, we compute the corresponding Christoffel symbol $\Gamma$ via
\begin{equation}
    \Gamma_{ij}^k=\frac{1}{2}\sum_{l}^n  g^{kl}\left(\frac{\partial  g_{jl}}{\partial x^i}+\frac{\partial  g_{il}}{\partial x^j}-\frac{\partial  g_{ij}}{\partial x^l}\right),
\end{equation}
where $g^{ij}$ denotes the components of the inverse metric tensor. Together with the position $\gamma$ and velocity $\Dot{\gamma}$, the Christoffel symbols enable us to find the acceleration $\Ddot{\gamma}$ at time $t$  by solving the geodesic equation $\Ddot{\gamma}^{(k)}+\Gamma_{ij}^k\Dot{\gamma}^{(i)}\Dot{\gamma}^{(j)}=0$, which gives
\begin{equation}
\Ddot{\gamma}^{(k)}(t)=-\left[\Dot{\gamma}^{(1)}(t)\; \Dot{\gamma}^{(2)}(t)\;  \Dot{\gamma}^{(3)}(t)\right]
    \left[\begin{array}{ccc}
        \Gamma^k_{11}(\gamma(t)) & \Gamma^k_{12}(\gamma(t)) & \Gamma^k_{13}(\gamma(t)) \\
        \Gamma^k_{21}(\gamma(t)) & \Gamma^k_{22}(\gamma(t)) & \Gamma^k_{23}(\gamma(t)) \\
        \Gamma^k_{31}(\gamma(t)) & \Gamma^k_{32}(\gamma(t)) & \Gamma^k_{33}(\gamma(t)) \\
    \end{array}\right]
    \left[\begin{array}{c}
         \Dot{\gamma}^{(1)}(t) \\
         \Dot{\gamma}^{(2)}(t) \\
         \Dot{\gamma}^{(3)}(t)
    \end{array}\right]
\end{equation}
where $\Ddot{\gamma}^{(k)},\Dot{\gamma}^{(k)}$ are the components of the acceleration vector $\Ddot{\gamma}$ and velocity vector $\Dot{\gamma}$. 

After the initial conditions for the position and velocity are given and $\Ddot{\gamma}^{(k)}(0)$ is computed, we update the acceleration at subsequent time steps using a fourth-order Runge-Kutta scheme~\citep{press1986numerical}. See also the function \texttt{algo.geodesic.geodesicpath()} in our open access repository. 

\section{Results}
We first demonstrate our framework in 2D using synthetic data and 2D data extracted from brain images.  Next, we construct a 3D connectome atlas from DWMRI for a subset of subjects from the Human Connectome Project.  We show that we can use the complementary information from T1-weighted MRI for those same subjects to build an integrated multimodal atlas.  Finally, we demonstrate that the multimodal atlas preserves both local and long-range connectivity information by computing both whole-brain and seed region-based geodesic tractography of the atlas.

\subsection{2D Simulated Data}
We verified our method by generating vector fields whose central integral curves are a family of parameterized cubic functions. We used the method of parallel curves to add vectors for additional integral curves parallel to the central curve with a distance $k \in [-0.2, 0.2]$ from the central curve.  We then constructed tensors whose principal eigenvectors align with the generated vector fields and that have a specified major axis to minor axis ratio of 6:1.  

We first estimated the adaptive metric conformal to the inverse-tensor metric such that the geodesics of the adaptive metrics align with the integral curves of the simulated vector fields. After finding the connectome metric for each subject, we  ran 400 iterations of the atlas building Algorithm~\ref{algo2} using only the metric distance as shown in~\eqref{eq:atlasmetric} to estimate the atlas in Figure~\ref{fig:cubicmeans}. To help the diffeomorphisms update smoothly, we set $\lambda_1 = 100$ in~\eqref{eq:energy-metric} and the step size $\epsilon=5$ in Algorithm~\ref{algo1}. 

We compared a geodesic of the atlas starting from a particular seed point with geodesics of the four connectome metrics starting from the atlas seed point mapped into individual space. Figure~\ref{fig:cubicmeans} shows these individual geodesics in atlas space before and after applying the diffeomorphisms.  We see that the atlas geodesic is nicely centered in the middle of the undeformed individual geodesics as expected. Also, the deformed individual geodesics align well with the atlas geodesic. The upper right panel shows the mean distance between the atlas geodesic and deformed subject geodesic in atlas space in different stages. After 75 iterations of optimization, we can see the distance of the four pairs of geodesics almost converged.

\begin{figure}[h]
\centering
\includegraphics[width=\linewidth]{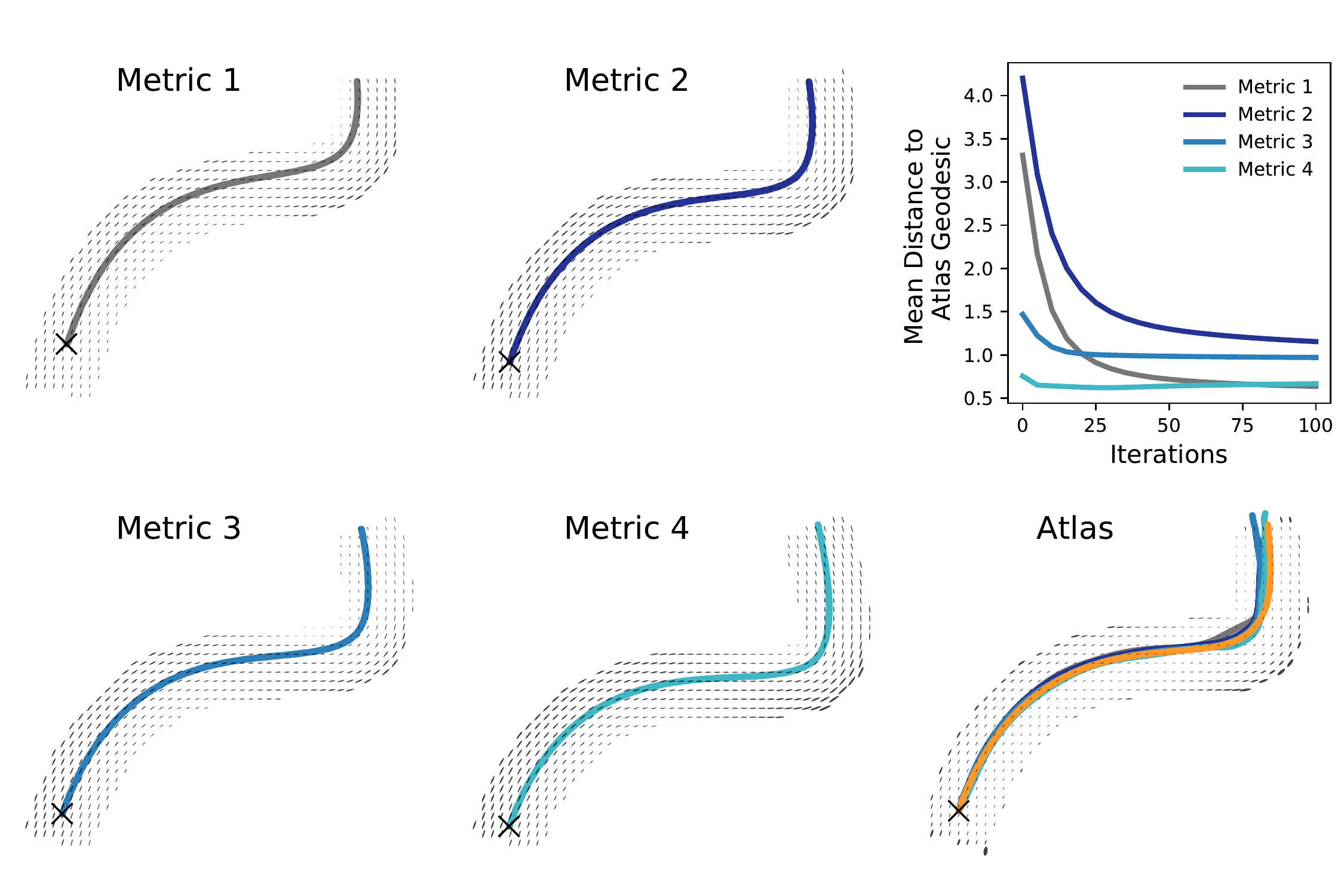}
\caption{Left and center: geodesics of four synthetic metrics starting from the atlas seed point (X) mapped into each metric's space. Upper right: mean distance trend between atlas geodesic and every deformed subject geodesic in atlas space. Lower right: estimated atlas with geodesic (orange) overlaid on geodesics from the four metrics deformed into atlas space. }
\label{fig:cubicmeans}
\end{figure}
\subsection{2D Real Data}
To illustrate how our method works with real data, we used two subjects from the Human Connectome Project Young Adult (HCP) dataset~\cite{glasser2013minimal}.  For each subject, we fit a diffusion tensor model to the images with a $b$-value of 1000 using \verb!dtifit! from FSL~\cite{basser1994estimation} and generated a white-matter mask based on fractional anisotropy values thresholded between 0.25 and 1. To process in 2D, we extracted a single axial slice from each image along with the x and y components of the associated tensors. We estimated the adaptive connectome metric from the inverse-tensor metric associated with the 2D diffusion tensors. We would like to emphasize that the 2D atlas construction is only for illustrative purposes. In real human data no tract is expected to live in a single plane and 2D processing is not appropriate. 

To generate the atlas shown in Figure~\ref{fig:ebinbraintoatlas}, we ran atlas building with only the metric distance terms from~\eqref{eq:atlasmetric} for 5000 iterations with $\lambda_1=100$, $\epsilon=1$, which took 50 minutes on an Intel Xeon Silver 4108 CPU. The regularization term, $\lambda_1$, balances the magnitudes of the diffeomorphisms from each subject's connectome metric to the atlas. To ensure that the final geodesics in the atlas also follow the major eigenvectors of the atlas tensors, we solve for the $\alpha$ conformal factor for the atlas as described in Section~\ref{sec:structasmet}. 

\begin{figure}[h]
\centering
\includegraphics[width=\linewidth]{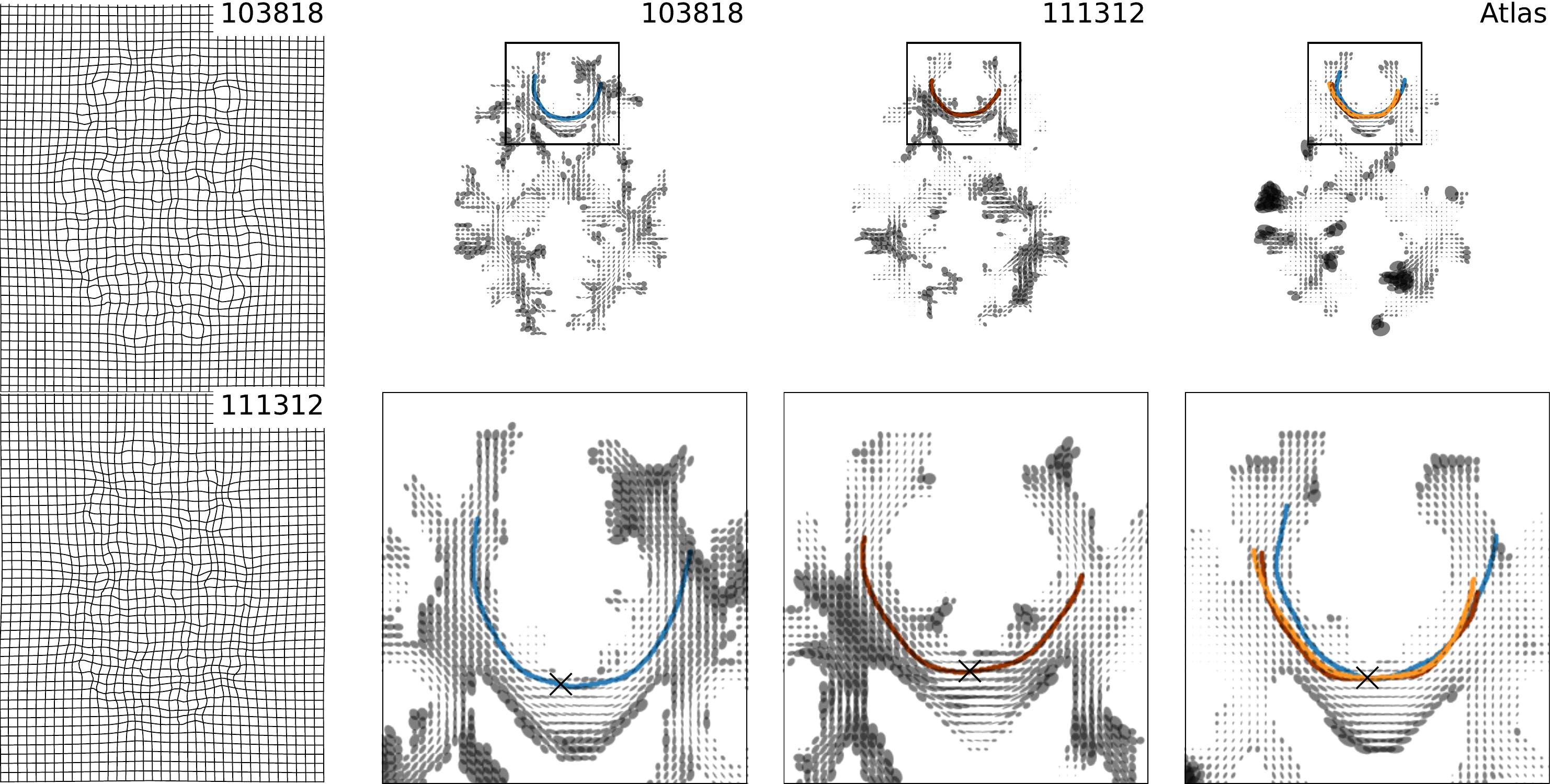}
\caption{Left: diffeomorphism from HCP subjects (103818, 111312) to the atlas. Center: each subject's connectome metric and a geodesic (blue, red) starting from the atlas seed (X) mapped to subject space.  Right: atlas and a geodesic (orange) starting at the seed (X). Subject geodesics are mapped to atlas space (blue, red). Bottom: detailed view of corpus callosum.}
\label{fig:ebinbraintoatlas}
\end{figure}

\subsection{3D Metric Atlas}
We used six subjects from the Human Connectome Project Young Adult (HCP) dataset~\cite{glasser2013minimal} in this experiment as shown in Figure~\ref{fig:HCPsubjects}.  For each subject, we fit a diffusion tensor model to the images using $b$-values of 1000, 2000, and 3000 using \verb!dtifit! from FSL~\cite{basser1994estimation} and generated a white-matter mask by keeping voxels with fractional anisotropy between 0.25 and 1. We rigidly registered the T1 images for each subject using \verb!brainsfit! from~\citet{johnson2007brainsfit}, and applied that rigid registration to the subject's white matter mask and diffusion tensors, being sure to reorient the individual tensors.  We estimated $\alpha$ for the adaptive connectome metric from the inverse-tensor metric associated with the diffusion tensors after smoothing them with a Gaussian filter, $\sigma=1.5$ and cleaning any tensors that were not positive semi-definite. The estimated $\alpha$ was clipped to the range $[-2, 2]$ before applying it the inverse-tensor metric associated with the unsmoothed diffusion tensors to create the adaptive connectome metric.  In the atlas building process, we used the metric distance terms from~\eqref{eq:atlasmetric} and set the step size $\epsilon = 0.005$ in Algorithm~\ref{algo1} and $\lambda_1 = 1.0$. For 800 iterations, the algorithm took about 12 hours on an Nvidia Titan RTX GPU, which features 24GB VRAM. 

To generate the T1 atlas for this experiment, we computed the mean of each subject's T1 image after deforming it by the diffeomorphism found from the metric atlas construction.  As shown in Figure~\ref{fig:metvsjoint}, the white matter atlas looks reasonable, but the T1 atlas is blurry, especially in gray matter regions because the atlas construction process is not using any gray matter information to guide the diffeomorphisms in these regions.

\begin{figure}[h]
\centering
\includegraphics[width=1\linewidth]{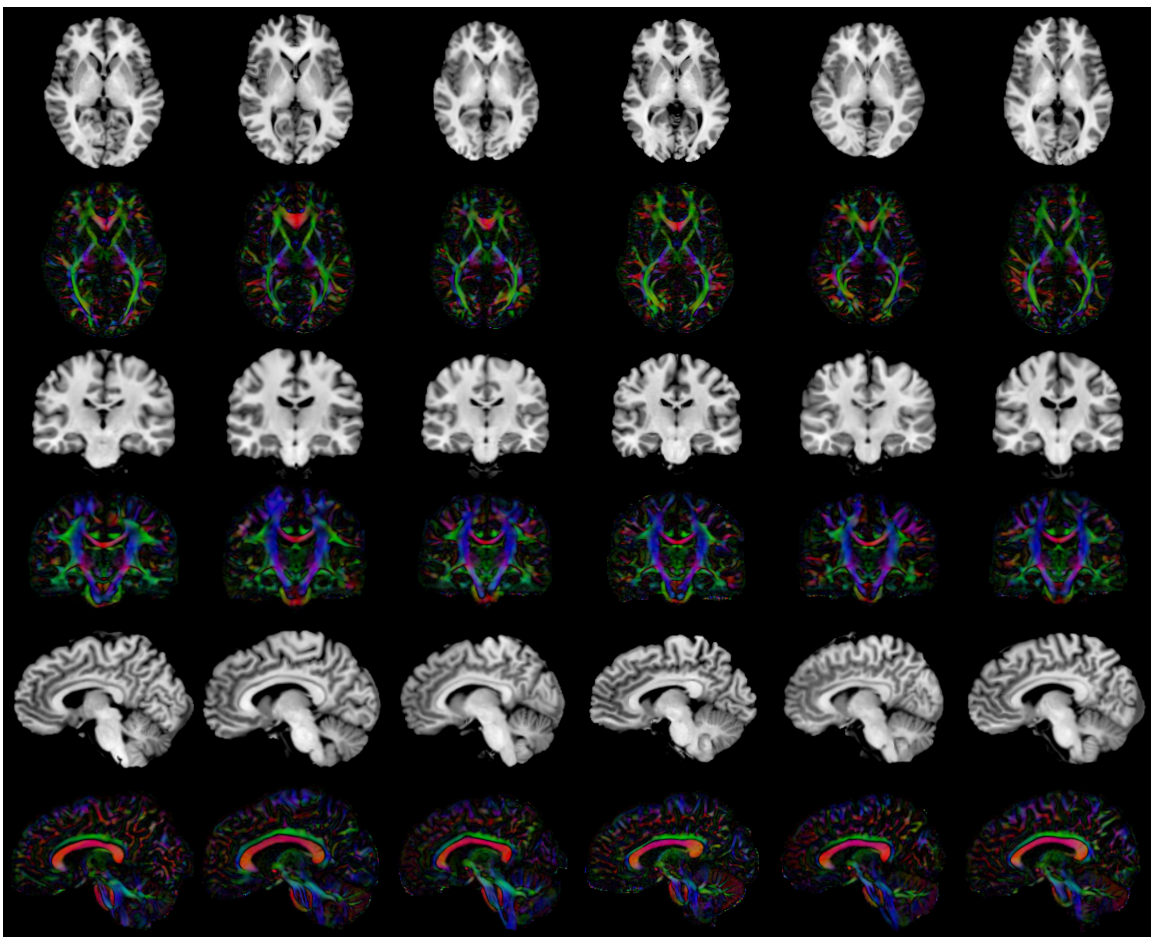}
\caption{Axial, coronal, and sagittal views of the 6 HCP subjects' T1 images and their metrics colored by the direction of the principal eigenvector of the inverse of the metric.  Red is oriented from left to right, green from anterior to posterior and blue from inferior to superior directions.}
\label{fig:HCPsubjects}
\end{figure}

\begin{figure}[h]
\centering
\def\svgwidth{\textwidth}
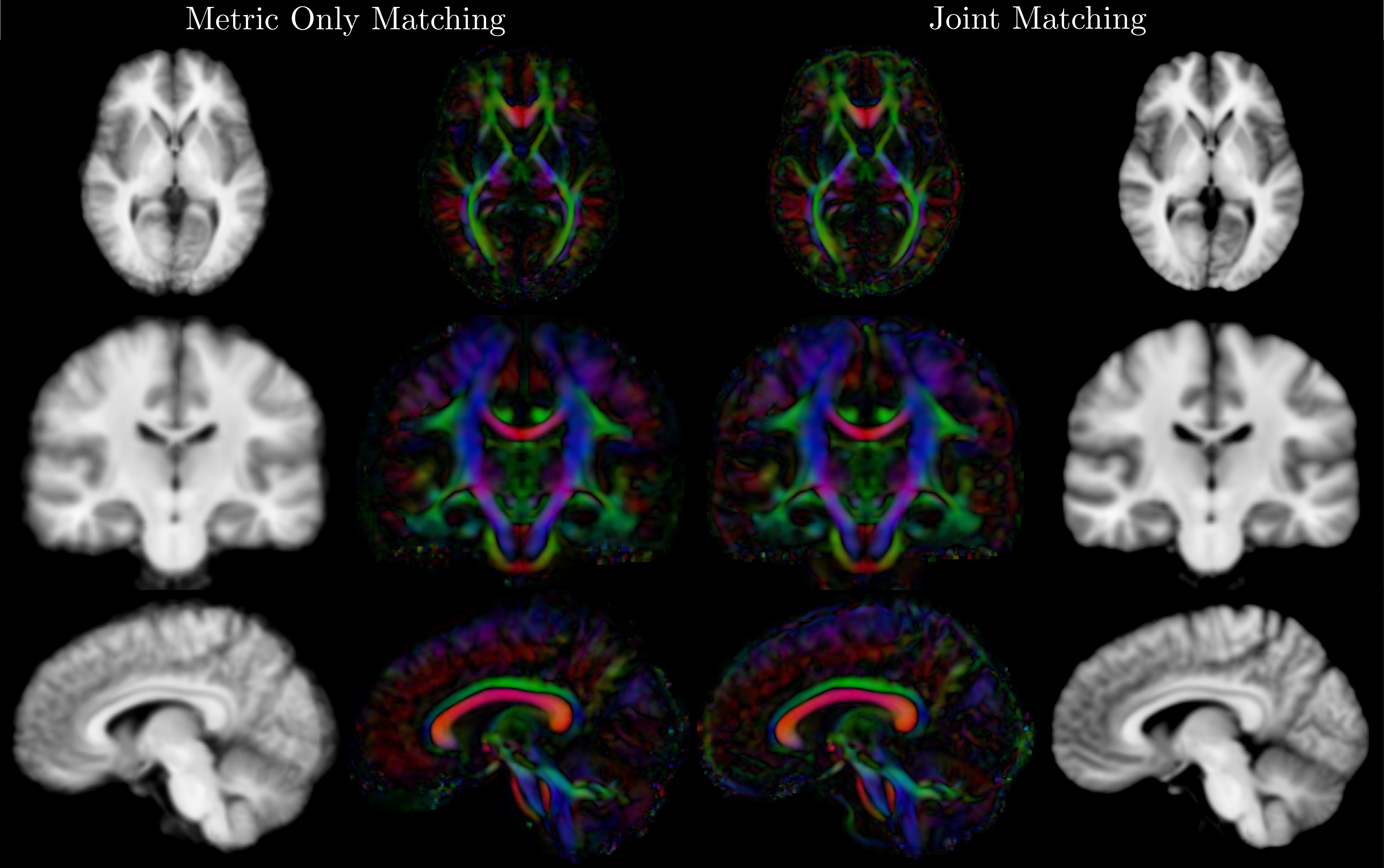
\caption{Axial, coronal and sagittal views of the T1 and metric atlas.  Left columns: the atlas produced using only the metric matching term.  Right columns: the atlas produced using metric and image matching terms jointly.}
\label{fig:metvsjoint}
\end{figure}

\subsection{3D Joint Atlas}
We used the same six subjects to construct a joint T1 and metric atlas by adding in the image matching term as shown in~\eqref{eq:atlasjoint}.  We set the weight of the image term, $\lambda_2$ in~\eqref{eq:energy-joint}, to $1.0 \times 10^{-8}$, which balances the image term loss to approximately the same magnitude as the metric term loss. All other parameters were kept the same as in the metric-only atlas building process.  The resulting metric and T1 atlases are shown in Figure~\ref{fig:metvsjoint}.  Note how much more well-defined the gray matter regions are when the T1 information is included in the optimization process.  At the same time, the metric atlas is not degraded by adding this term in, and may even be slightly better than before.  
\begin{figure}[h]
\centering
\includegraphics[width=\linewidth]{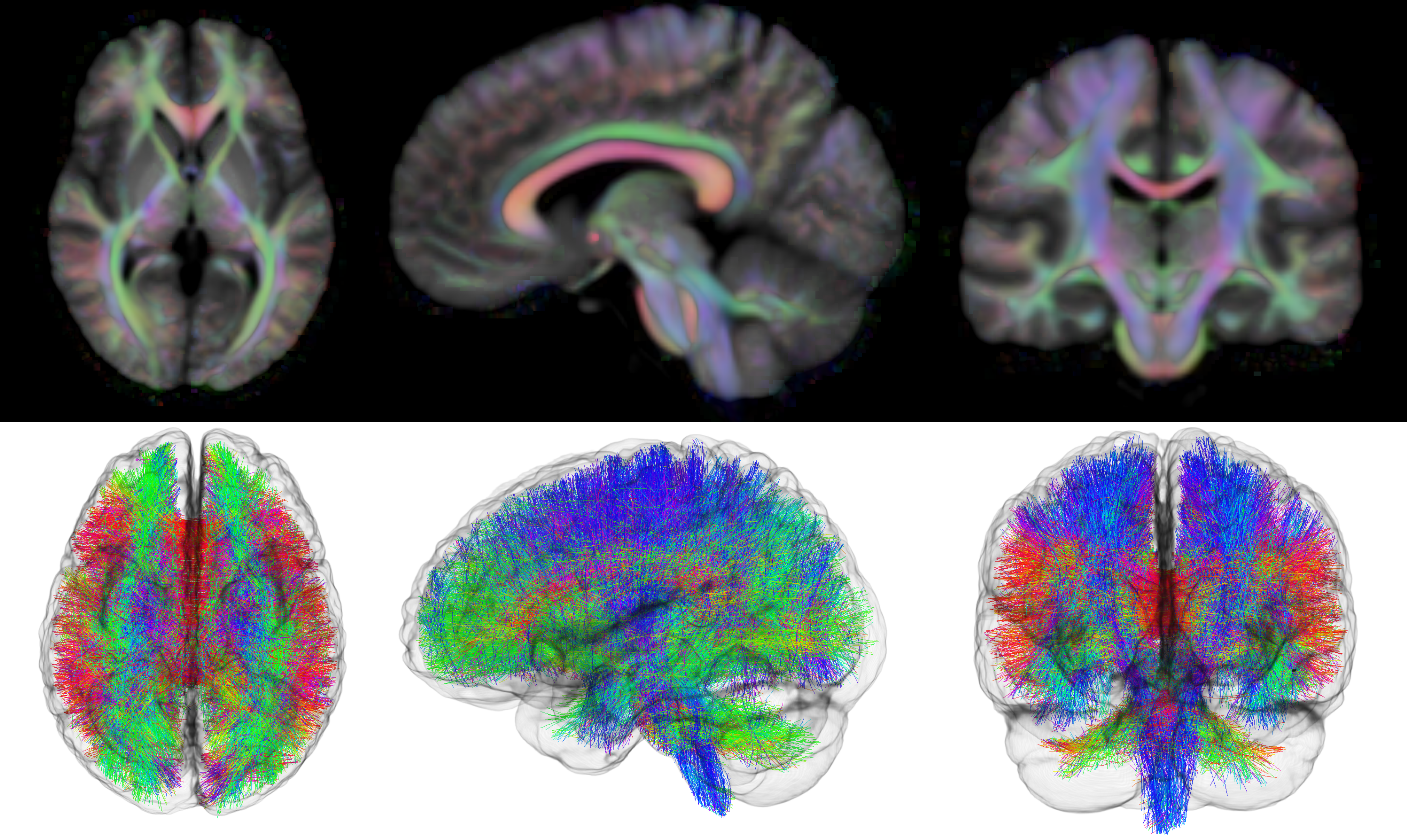}
\caption{Top row: metric colored by orientation overlaid on T1 for joint atlas of six HCP individuals.  Bottom row: whole-brain geodesic tractography for joint atlas, with tract segments colored by their orientation.}
\label{fig:jointatlas}
\end{figure}

The whole-brain tractography shown in Figure~\ref{fig:jointatlas} was performed by seeding eight geodesics per voxel in the white matter mask, stopping when each geodesic left the averaged white matter mask.  Individual tracts were created by drawing seed regions in the atlas space and transforming the regions back to subject space. Each voxel in the seed region had 27 seeds.  Geodesics terminated when fractional anisotropy dropped below 0.2 for the subject tractography, or when the geodesic left the averaged white matter mask for the atlas tractography.  Other than the seed region and FA thresholding, we did not apply any stopping or rejection criteria based on angle, tract length, anatomical priors or any of the other techniques used to clean up false positives.  We performed tractography visualization using 3D Slicer (www.slicer.org) via the SlicerDMRI project (http://dmri.slicer.org)~\citep{norton2017slicerdmri,zhang2020slicerdmri}.
Note that the whole-brain tractography can also be registered to the O'Donnell atlas using the \texttt{WhiteMatterAnalysis} toolkit~\citep{o2007automatic,o2012unbiased,zhang2018anatomically} in order to do clustering based on that atlas if desired.  Example tracts for the genu of the corpus callosum, corticospinal tract, and cingulum computed by integrating geodesics of the joint atlas are shown in Figure~\ref{fig:3datlastracts}.

\begin{figure}[h]
\centering
\includegraphics[width=.9\linewidth]{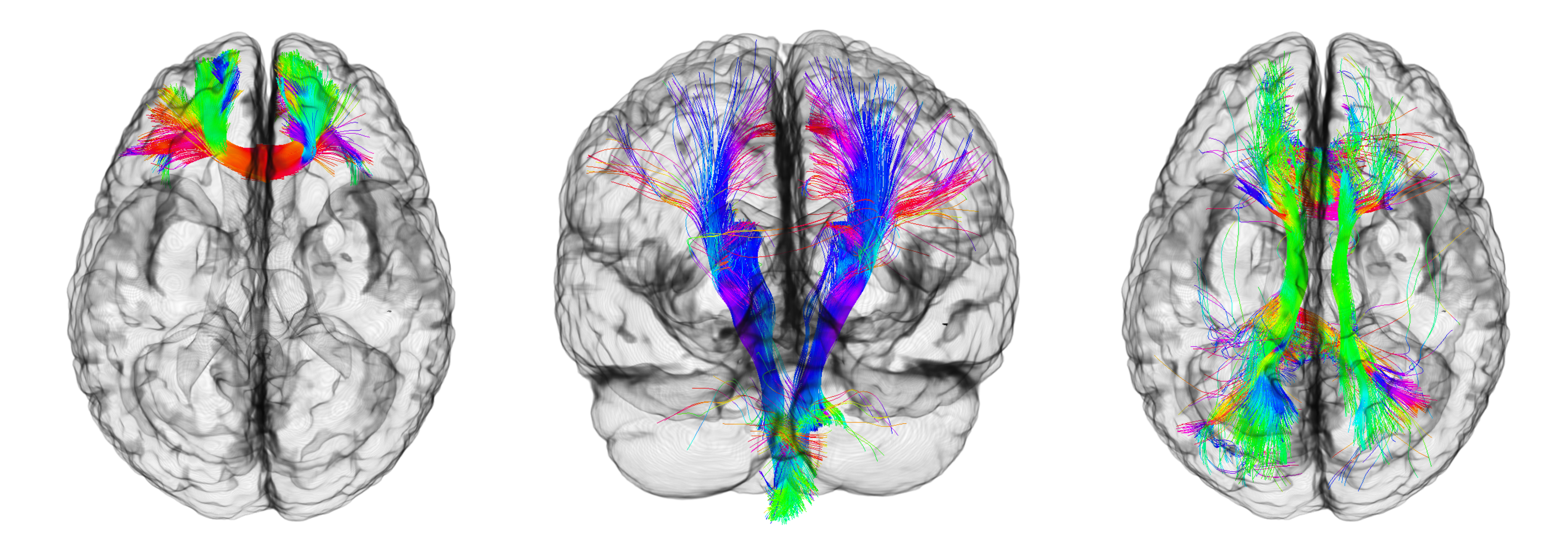}
\caption{Examples of joint atlas tractography generated from seeds placed in atlas seed regions, with tract segments colored by orientation. Left: axial view of tractography seeded in the genu of the corpus callosum. Center: coronal view of tractography seeded in the corticospinal tract.  Right: axial view of tractography seeded in the cingulum.}
\label{fig:3datlastracts}
\end{figure}

\section{Conclusions}
In this paper, we introduce a novel framework for statistically analyzing structural connectomes by representing them as a point on the manifold of Riemannian metrics, enabling us to perform geometric statistics. Using this representation, we build a framework for connectome atlas construction based on the action of the diffeomorphism group and the natural Ebin metric on the space of all Riemannian metrics. Because this framework is compatible with existing image atlas construction frameworks, we are then able to construct an integrated multimodal atlas using complementary white matter and cortical information from DWMRI and T1-weighted MRI simultaneously.  
In future work we plan to investigate in more detail the convergence properties of the proposed algorithms and quantitatively compare  our approach to other existing methods. We expect this new methodology to open up opportunities for a deeper understanding of structural connectomes, their variabilities and their relationships to cortical structure.

\acks{M. Bauer was supported by NSF grants DMS-1912037, DMS-1953244. K. Campbell, H. Dai, S. Joshi
  and P. Fletcher were supported by NSF grant DMS-1912030. Z. Su was supported by NSF grant
  DMS-1912037, NIH/NIAAA award R01-AA026834. Data were provided in part by the Human Connectome
  Project, WU-Minn Consortium (Principal Investigators: David Van Essen and Kamil Ugurbil;
  1U54MH091657) funded by the 16 NIH Institutes and Centers that support the NIH Blueprint for
  Neuroscience Research; and by the McDonnell Center for Systems Neuroscience at Washington
  University.}

\ethics{The work follows appropriate ethical standards in conducting research and writing the
  manuscript, following all applicable laws and regulations regarding treatment of animals or human
  subjects.}

\coi{The authors have no conflicts of interest.}

\bibliography{main}

\begin{thebibliography}{48}
\providecommand{\natexlab}[1]{#1}
\providecommand{\url}[1]{\texttt{#1}}
\expandafter\ifx\csname urlstyle\endcsname\relax
  \providecommand{\doi}[1]{doi: #1}\else
  \providecommand{\doi}{doi: \begingroup \urlstyle{rm}\Url}\fi

\bibitem[Alexander et~al.(2001)Alexander, Pierpaoli, Basser, and
  Gee]{alexander2001spatial}
Daniel~C Alexander, Carlo Pierpaoli, Peter~J Basser, and James~C Gee.
\newblock Spatial transformations of diffusion tensor magnetic resonance
  images.
\newblock \emph{IEEE Transactions on Medical Imaging}, 20\penalty0
  (11):\penalty0 1131--1139, 2001.

\bibitem[Avants et~al.(2007)Avants, Duda, Zhang, and
  Gee]{avants2007multivariate}
Brian Avants, Jeffrey~T Duda, Hui Zhang, and James~C Gee.
\newblock Multivariate normalization with symmetric diffeomorphisms for
  multivariate studies.
\newblock In \emph{International Conference on Medical Image Computing and
  Computer-Assisted Intervention}, pages 359--366. Springer, 2007.

\bibitem[Basser et~al.(1994)Basser, Mattiello, and
  LeBihan]{basser1994estimation}
Peter~J Basser, James Mattiello, and Denis LeBihan.
\newblock Estimation of the effective self-diffusion tensor from the {NMR} spin
  echo.
\newblock \emph{Journal of Magnetic Resonance, Series B}, 103\penalty0
  (3):\penalty0 247--254, 1994.

\bibitem[Basser et~al.(2000)Basser, Pajevic, Pierpaoli, Duda, and
  Aldroubi]{basser2000vivo}
Peter~J Basser, Sinisa Pajevic, Carlo Pierpaoli, Jeffrey Duda, and Akram
  Aldroubi.
\newblock In vivo fiber tractography using {DT-MRI} data.
\newblock \emph{Magnetic resonance in medicine}, 44\penalty0 (4):\penalty0
  625--632, 2000.

\bibitem[Bauer et~al.(2015)Bauer, Joshi, and Modin]{bauer2015diffeomorphic}
Martin Bauer, Sarang Joshi, and Klas Modin.
\newblock Diffeomorphic density matching by optimal information transport.
\newblock \emph{SIAM Journal on Imaging Sciences}, 8\penalty0 (3):\penalty0
  1718--1751, 2015.

\bibitem[Beg et~al.(2005)Beg, Miller, Trouv{\'e}, and Younes]{beg2005computing}
M~Faisal Beg, Michael~I Miller, Alain Trouv{\'e}, and Laurent Younes.
\newblock Computing large deformation metric mappings via geodesic flows of
  diffeomorphisms.
\newblock \emph{International Journal of Computer Vision}, 61\penalty0
  (2):\penalty0 139--157, 2005.

\bibitem[Behrens et~al.(2003)Behrens, Woolrich, Jenkinson, Johansen-Berg,
  Nunes, Clare, Matthews, Brady, and Smith]{behrens2003characterization}
Timothy~EJ Behrens, Mark~W Woolrich, Mark Jenkinson, Heidi Johansen-Berg,
  Rita~G Nunes, Stuart Clare, Paul~M Matthews, J~Michael Brady, and Stephen~M
  Smith.
\newblock Characterization and propagation of uncertainty in diffusion-weighted
  {MR} imaging.
\newblock \emph{Magnetic Resonance in Medicine: An Official Journal of the
  International Society for Magnetic Resonance in Medicine}, 50\penalty0
  (5):\penalty0 1077--1088, 2003.

\bibitem[Campbell et~al.(2021)Campbell, Dai, Su, Bauer, Fletcher, and
  Joshi]{campbell2021structural}
Kristen~M Campbell, Haocheng Dai, Zhe Su, Martin Bauer, P~Thomas Fletcher, and
  Sarang~C Joshi.
\newblock Structural connectome atlas construction in the space of {R}iemannian
  metrics.
\newblock In \emph{International Conference on Information Processing in
  Medical Imaging}, pages 291--303. Springer, 2021.

\bibitem[Cheng et~al.(2015)Cheng, Salehian, Forder, and
  Vemuri]{cheng2015tractography}
Guang Cheng, Hesamoddin Salehian, John~R Forder, and Baba~C Vemuri.
\newblock Tractography from {HARDI} using an intrinsic unscented {K}alman
  filter.
\newblock \emph{IEEE transactions on medical imaging}, 34\penalty0
  (1):\penalty0 298--305, 2015.

\bibitem[Christiaens et~al.(2015)Christiaens, Reisert, Dhollander, Sunaert,
  Suetens, and Maes]{christiaens2015global}
Daan Christiaens, Marco Reisert, Thijs Dhollander, Stefan Sunaert, Paul
  Suetens, and Frederik Maes.
\newblock Global tractography of multi-shell diffusion-weighted imaging data
  using a multi-tissue model.
\newblock \emph{Neuroimage}, 123:\penalty0 89--101, 2015.

\bibitem[Clarke(2013{\natexlab{a}})]{clarke2013completion}
Brian Clarke.
\newblock The completion of the manifold of {R}iemannian metrics.
\newblock \emph{Journal of Differential Geometry}, 93\penalty0 (2):\penalty0
  203--268, 2013{\natexlab{a}}.

\bibitem[Clarke(2013{\natexlab{b}})]{clarke2013geodesics}
Brian Clarke.
\newblock Geodesics, distance, and the {CAT(0)} property for the manifold of
  {R}iemannian metrics.
\newblock \emph{Mathematische Zeitschrift}, 273\penalty0 (1-2):\penalty0
  55--93, 2013{\natexlab{b}}.

\bibitem[DeWitt(1967)]{DeWitt67}
B.~S. DeWitt.
\newblock Quantum {t}heory {o}f {g}ravity. {I}. {T}he {c}anonical {t}heory.
\newblock \emph{Phys. Rev.}, 160 (5):\penalty0 1113--1148, 1967.

\bibitem[Ebin(1970)]{Ebin1970a}
David~G. Ebin.
\newblock {The manifold of {R}iemannian metrics}.
\newblock In \emph{Global {A}nalysis ({P}roc. {S}ympos. {P}ure {M}ath., {V}ol.
  {XV}, {B}erkeley, {C}alif., 1968)}, pages 11--40. Amer. Math. Soc.,
  Providence, R.I., 1970.

\bibitem[Fletcher and Joshi(2007)]{fletcher2007riemannian}
P~Thomas Fletcher and Sarang Joshi.
\newblock Riemannian geometry for the statistical analysis of diffusion tensor
  data.
\newblock \emph{Signal Processing}, 87\penalty0 (2):\penalty0 250--262, 2007.

\bibitem[Fletcher et~al.(2007)Fletcher, Tao, Jeong, and Whitaker]{fletcher2007}
P~Thomas Fletcher, Ran Tao, Won-Ki Jeong, and Ross~T Whitaker.
\newblock A volumetric approach to quantifying region-to-region white matter
  connectivity in diffusion tensor {MRI}.
\newblock In \emph{Information Processing in Medical Imaging (IPMI)}, pages
  346--358. Springer, 2007.

\bibitem[Freed et~al.(1989)Freed, Groisser, et~al.]{freed1989basic}
Daniel~S Freed, David Groisser, et~al.
\newblock The basic geometry of the manifold of {R}iemannian metrics and of its
  quotient by the diffeomorphism group.
\newblock \emph{The Michigan Mathematical Journal}, 36\penalty0 (3):\penalty0
  323--344, 1989.

\bibitem[Fuster et~al.(2016)Fuster, Haije, Trist{\'a}n-Vega, Plantinga, Westin,
  and Florack]{fuster2016}
Andrea Fuster, Tom~Dela Haije, Antonio Trist{\'a}n-Vega, Birgit Plantinga,
  Carl-Fredrik Westin, and Luc Florack.
\newblock Adjugate diffusion tensors for geodesic tractography in white matter.
\newblock \emph{Journal of Mathematical Imaging and Vision}, 54\penalty0
  (1):\penalty0 1--14, 2016.

\bibitem[Gil-Medrano and Michor(1991)]{gil1991riemannian}
Olga Gil-Medrano and Peter~W Michor.
\newblock The {R}iemannian manifold of all {R}iemannian metrics.
\newblock \emph{Quarterly Journal of Mathematics (Oxford)}, 42:\penalty0
  183--202, 1991.

\bibitem[Glasser et~al.(2013)Glasser, Sotiropoulos, Wilson, Coalson, Fischl,
  Andersson, Xu, Jbabdi, Webster, and Polimeni]{glasser2013minimal}
Matthew~F Glasser, Stamatios~N Sotiropoulos, J~Anthony Wilson, Timothy~S
  Coalson, Bruce Fischl, Jesper~L Andersson, Junqian Xu, Saad Jbabdi, Matthew
  Webster, and Jonathan~R Polimeni.
\newblock The minimal preprocessing pipelines for the human connectome project.
\newblock \emph{Neuroimage}, 80:\penalty0 105--124, 2013.

\bibitem[Glaun{\`e}s et~al.(2008)Glaun{\`e}s, Qiu, Miller, and
  Younes]{glaunes2008large}
Joan Glaun{\`e}s, Anqi Qiu, Michael~I Miller, and Laurent Younes.
\newblock Large deformation diffeomorphic metric curve mapping.
\newblock \emph{International journal of computer vision}, 80\penalty0
  (3):\penalty0 317, 2008.

\bibitem[Gupta et~al.(2016)Gupta, Malandain, Ayache, and
  Pennec]{gupta2016framework}
Vikash Gupta, Gr{\'e}goire Malandain, Nicholas Ayache, and Xavier Pennec.
\newblock A framework for creating population specific multimodal brain atlas
  using clinical {T1} and diffusion tensor images.
\newblock In \emph{Computational Diffusion MRI}, pages 99--108. Springer, 2016.

\bibitem[Hao et~al.(2014)Hao, Zygmunt, Whitaker, and Fletcher]{hao2014improved}
Xiang Hao, Kristen Zygmunt, Ross~T Whitaker, and P~Thomas Fletcher.
\newblock Improved segmentation of white matter tracts with adaptive
  {R}iemannian metrics.
\newblock \emph{Medical Image Analysis}, 18\penalty0 (1):\penalty0 161--175,
  2014.

\bibitem[Ho et~al.(2013)Ho, Cheng, Salehian, and Vemuri]{ho2013recursive}
Jeffrey Ho, Guang Cheng, Hesamoddin Salehian, and Baba Vemuri.
\newblock Recursive {K}archer expectation estimators and geometric law of large
  numbers.
\newblock In \emph{Artificial Intelligence and Statistics}, pages 325--332,
  2013.

\bibitem[Jbabdi et~al.(2007)Jbabdi, Woolrich, Andersson, and
  Behrens]{jbabdi2007bayesian}
Saad Jbabdi, Mark~W Woolrich, Jesper~LR Andersson, and TEJ Behrens.
\newblock A {B}ayesian framework for global tractography.
\newblock \emph{Neuroimage}, 37\penalty0 (1):\penalty0 116--129, 2007.

\bibitem[Jeong et~al.(2007)Jeong, Fletcher, Tao, and Whitaker]{jeong2007}
Won-Ki Jeong, P~Thomas Fletcher, Ran Tao, and Ross Whitaker.
\newblock Interactive visualization of volumetric white matter connectivity in
  {DT-MRI} using a parallel-hardware {H}amilton-{J}acobi solver.
\newblock \emph{IEEE transactions on visualization and computer graphics},
  13\penalty0 (6):\penalty0 1480--1487, 2007.

\bibitem[Jeurissen et~al.(2019)Jeurissen, Descoteaux, Mori, and
  Leemans]{jeurissen2019diffusion}
Ben Jeurissen, Maxime Descoteaux, Susumu Mori, and Alexander Leemans.
\newblock Diffusion mri fiber tractography of the brain.
\newblock \emph{NMR in Biomedicine}, 32\penalty0 (4):\penalty0 e3785, 2019.

\bibitem[Johnson et~al.(2007)Johnson, Harris, Williams,
  et~al.]{johnson2007brainsfit}
Hans Johnson, Greg Harris, Kent Williams, et~al.
\newblock {BRAINSFit}: mutual information rigid registrations of whole-brain
  {3D} images, using the insight toolkit.
\newblock \emph{Insight J}, 57\penalty0 (1):\penalty0 1--10, 2007.

\bibitem[Joshi et~al.(2004)Joshi, Davis, Jomier, and Gerig]{joshi2004unbiased}
Sarang Joshi, Brad Davis, Matthieu Jomier, and Guido Gerig.
\newblock Unbiased diffeomorphic atlas construction for computational anatomy.
\newblock \emph{NeuroImage}, 23:\penalty0 S151--S160, 2004.

\bibitem[Joshi and Miller(2000)]{joshi2000landmark}
Sarang~C Joshi and Michael~I Miller.
\newblock Landmark matching via large deformation diffeomorphisms.
\newblock \emph{IEEE Transactions on Image Processing}, 9\penalty0
  (8):\penalty0 1357--1370, 2000.

\bibitem[Khesin et~al.(2013)Khesin, Lenells, Misio{\l}ek, and
  Preston]{KLMP2013}
B.~Khesin, J.~Lenells, G.~Misio{\l}ek, and S.~C. Preston.
\newblock Geometry of diffeomorphism groups, complete integrability and
  geometric statistics.
\newblock \emph{Geom. Funct. Anal.}, 23\penalty0 (1):\penalty0 334--366, 2013.
\newblock ISSN 1016-443X.

\bibitem[Lenglet et~al.(2004)Lenglet, Deriche, and Faugeras]{lenglet2004}
Christophe Lenglet, Rachid Deriche, and Olivier Faugeras.
\newblock Inferring white matter geometry from diffusion tensor {MRI}:
  Application to connectivity mapping.
\newblock In \emph{European Conference on Computer Vision}, pages 127--140.
  Springer, 2004.

\bibitem[Lenssen et~al.(2019)Lenssen, Osendorfer, and Masci]{lenssen2019}
Jan~Eric Lenssen, Christian Osendorfer, and Jonathan Masci.
\newblock Deep iterative surface normal estimation, 2019.

\bibitem[Modin(2015)]{Modin2015}
Klas Modin.
\newblock Generalized {H}unter-{S}axton equations, optimal information
  transport, and factorization of diffeomorphisms.
\newblock \emph{J. Geom. Anal.}, 25\penalty0 (2):\penalty0 1306--1334, 2015.
\newblock ISSN 1050-6926.
\newblock \doi{10.1007/s12220-014-9469-2}.
\newblock URL \url{http://dx.doi.org/10.1007/s12220-014-9469-2}.

\bibitem[Mori et~al.(2008)Mori, Oishi, Jiang, Jiang, Li, Akhter, Hua, Faria,
  Mahmood, Woods, Toga, Pike, Neto, Evans, Zhang, Huang, Miller, van Zij, and
  Mazziotta]{mori2008dti}
Susumu Mori, Kenichi Oishi, Hangyi Jiang, Li~Jiang, Xin Li, Kazi Akhter, Kegang
  Hua, Andreia~V. Faria, Asif Mahmood, Roger Woods, Arthur Toga, Bruce Pike,
  Pedro~Rosa Neto, Alan Evans, Jiangyang Zhang, Hao Huang, Michael~I. Miller,
  Peter van Zij, and John Mazziotta.
\newblock Stereotaxic white matter atlas based on diffusion tensor imaging in
  an {ICBM} template.
\newblock \emph{Neuroimage}, 40\penalty0 (2):\penalty0 570--582, 2008.

\bibitem[Nie and Shi(2019)]{nie2019topographic}
Xinyu Nie and Yonggang Shi.
\newblock Topographic filtering of tractograms as vector field flows.
\newblock In \emph{International Conference on Medical Image Computing and
  Computer-Assisted Intervention}, pages 564--572. Springer, 2019.

\bibitem[Norton et~al.(2017)Norton, Essayed, Zhang, Pujol, Yarmarkovich, Golby,
  Kindlmann, Wassermann, Estepar, Rathi, et~al.]{norton2017slicerdmri}
Isaiah Norton, Walid~Ibn Essayed, Fan Zhang, Sonia Pujol, Alex Yarmarkovich,
  Alexandra~J Golby, Gordon Kindlmann, Demian Wassermann, Raul San~Jose
  Estepar, Yogesh Rathi, et~al.
\newblock {SlicerDMRI}: open source diffusion {MRI} software for brain cancer
  research.
\newblock \emph{Cancer research}, 77\penalty0 (21):\penalty0 e101--e103, 2017.

\bibitem[O'Donnell et~al.(2002)O'Donnell, Haker, and Westin]{o2002new}
Lauren O'Donnell, Steven Haker, and Carl-Fredrik Westin.
\newblock New approaches to estimation of white matter connectivity in
  diffusion tensor {MRI}: Elliptic {PDE}s and geodesics in a tensor-warped
  space.
\newblock In \emph{International Conference on Medical Image Computing and
  Computer-Assisted Intervention}, pages 459--466, 2002.

\bibitem[O'Donnell and Westin(2007)]{o2007automatic}
Lauren~J O'Donnell and Carl-Fredrik Westin.
\newblock Automatic tractography segmentation using a high-dimensional white
  matter atlas.
\newblock \emph{IEEE transactions on medical imaging}, 26\penalty0
  (11):\penalty0 1562--1575, 2007.

\bibitem[O’Donnell et~al.(2012)O’Donnell, Wells, Golby, and
  Westin]{o2012unbiased}
Lauren~J O’Donnell, William~M Wells, Alexandra~J Golby, and Carl-Fredrik
  Westin.
\newblock Unbiased groupwise registration of white matter tractography.
\newblock In \emph{International Conference on Medical Image Computing and
  Computer-Assisted Intervention}, pages 123--130. Springer, 2012.

\bibitem[Press et~al.(1986)Press, Vetterling, Teukolsky, and
  Flannery]{press1986numerical}
William~H Press, William~T Vetterling, Saul~A Teukolsky, and Brian~P Flannery.
\newblock \emph{Numerical recipes}, volume 818.
\newblock Cambridge university press Cambridge, 1986.

\bibitem[Sengers et~al.(2021)Sengers, Florack, and Fuster]{sengers2021}
Rick Sengers, Luc Florack, and Andrea Fuster.
\newblock Geodesic tubes for uncertainty quantification in diffusion {MR}.
\newblock In \emph{Information Processing in Medical Imaging (IPMI)}, pages
  279--290. Springer, 2021.

\bibitem[Toga et~al.(2006)Toga, Thompson, Mori, Amunts, and
  Zilles]{toga2006towards}
Arthur~W Toga, Paul~M Thompson, Susumu Mori, Katrin Amunts, and Karl Zilles.
\newblock Towards multimodal atlases of the human brain.
\newblock \emph{Nature Reviews Neuroscience}, 7\penalty0 (12):\penalty0
  952--966, 2006.

\bibitem[Uus et~al.(2020)Uus, Pietsch, Grigorescu, Christiaens, Tournier,
  Grande, Hutter, Edwards, Hajnal, and Deprez]{uus2020multi}
Alena Uus, Maximilian Pietsch, Irina Grigorescu, Daan Christiaens,
  Jacques-Donald Tournier, Lucilio~Cordero Grande, Jana Hutter, David Edwards,
  Joseph Hajnal, and Maria Deprez.
\newblock Multi-channel registration for diffusion {MRI}: Longitudinal analysis
  for the neonatal brain.
\newblock In \emph{International Workshop on Biomedical Image Registration},
  pages 111--121. Springer, 2020.

\bibitem[Vaillant and Glaun{\`e}s(2005)]{vaillant2005surface}
Marc Vaillant and Joan Glaun{\`e}s.
\newblock Surface matching via currents.
\newblock In \emph{International Conference on Information Processing in
  Medical Imaging}, pages 381--392. Springer, 2005.

\bibitem[Yeh et~al.(2018)Yeh, Panesar, Fernandes, Meola, Yoshino,
  Fernandez-Miranda, Vettel, and Verstynen]{yeh2018population}
Fang-Cheng Yeh, Sandip Panesar, David Fernandes, Antonio Meola, Masanori
  Yoshino, Juan~C Fernandez-Miranda, Jean~M Vettel, and Timothy Verstynen.
\newblock Population-averaged atlas of the macroscale human structural
  connectome and its network topology.
\newblock \emph{NeuroImage}, 178:\penalty0 57--68, 2018.

\bibitem[Zhang et~al.(2018)Zhang, Wu, Norton, Rigolo, Rathi, Makris, and
  O'Donnell]{zhang2018anatomically}
Fan Zhang, Ye~Wu, Isaiah Norton, Laura Rigolo, Yogesh Rathi, Nikos Makris, and
  Lauren~J O'Donnell.
\newblock An anatomically curated fiber clustering white matter atlas for
  consistent white matter tract parcellation across the lifespan.
\newblock \emph{NeuroImage}, 179:\penalty0 429--447, 2018.

\bibitem[Zhang et~al.(2020)Zhang, Noh, Juvekar, Frisken, Rigolo, Norton, Kapur,
  Pujol, Wells~III, Yarmarkovich, et~al.]{zhang2020slicerdmri}
Fan Zhang, Thomas Noh, Parikshit Juvekar, Sarah~F Frisken, Laura Rigolo, Isaiah
  Norton, Tina Kapur, Sonia Pujol, William Wells~III, Alex Yarmarkovich, et~al.
\newblock {SlicerDMRI}: Diffusion {MRI} and tractography research software for
  brain cancer surgery planning and visualization.
\newblock \emph{JCO clinical cancer informatics}, 4:\penalty0 299--309, 2020.

\end{thebibliography}

\end{document}